\documentclass[11pt,letterpaper]{article}%
\usepackage[T1]{fontenc}
\usepackage{libertine}
\usepackage[table]{xcolor}
\usepackage[american]{babel}
\usepackage{xfrac}
\usepackage{bbm}
\usepackage{authblk}
\usepackage{amsthm,amssymb}
\usepackage{bm}
\newtheorem{proposition}{Proposition}[section]
\newtheorem{theorem}[proposition]{Theorem}

\newtheorem{axiom}{Axiom}
\usepackage[format=plain,font=it]{caption}
\usepackage{algorithm}
\usepackage{algpseudocode}
\usepackage{pdflscape}
\usepackage{tabularx}
\usepackage{stmaryrd}

\usepackage{geometry}
\usepackage{setspace}

\usepackage{natbib}
\bibpunct[, ]{(}{)}{,}{a}{}{,}
\usepackage{mathtools}
\usepackage{booktabs}
\usepackage{multirow}
\usepackage{tikz}
\usetikzlibrary{positioning,calc,arrows.meta}

\usepackage{float}
\usepackage{subfig}
\usepackage{amsmath}
\usepackage{amsfonts}
\usepackage{comment}
\usepackage{dsfont}

\newtheorem*{definition}{Definition}

\definecolor{revnotebg}{HTML}{FFF3CD}
\definecolor{revnoteborder}{HTML}{B38600}
\newcommand{\revnote}[1]{}

\usepackage{array}
\newcolumntype{C}[1]{>{\centering\arraybackslash}m{#1}}
\newcolumntype{R}[1]{>{\raggedleft\arraybackslash}m{#1}}
\newcolumntype{L}[1]{>{\raggedright\arraybackslash}m{#1}}

\title{From Explaining to Enabling: Actionable Transparency in Algorithmic Insurance Pricing}
\date{}

\usepackage{graphicx,pstricks,psfrag}
\definecolor{bleu}{RGB}{0,101,189}
\definecolor{vert}{HTML}{004D40}
\definecolor{rose}{HTML}{D81B60}
\definecolor{bleuTOL}{HTML}{332288}
\definecolor{wongBlack}{RGB}{0,0,0}
\definecolor{wongGold}{RGB}{230, 159, 0}
\definecolor{wongLightBlue}{RGB}{86, 180, 233}
\definecolor{wongGreen}{RGB}{0, 158, 115}
\definecolor{wongYellow}{RGB}{240, 228, 66}
\definecolor{wongBlue}{RGB}{0, 114, 178}
\definecolor{wongOrange}{RGB}{213, 94, 0}
\definecolor{wongPurple}{RGB}{204, 121, 167}
\definecolor{colUncalibrated}{RGB}{191, 191, 191}
\definecolor{colRecalibrated}{RGB}{197, 214, 231}

\definecolor{bleuTOL}{HTML}{332288}
\definecolor{vertTOL}{HTML}{117733}
\definecolor{vertClairTOL}{HTML}{44AA99}
\definecolor{bleuClairTOL}{HTML}{88CCEE}
\definecolor{sableTOL}{HTML}{DDCC77}
\definecolor{parmeTOL}{HTML}{CC6677}
\definecolor{magentaTOL}{HTML}{AA4499}
\definecolor{roseTOL}{HTML}{882255}

\usepackage{hyperref}
\hypersetup{
    plainpages=false,
    bookmarksopen,
    bookmarksnumbered,
    unicode=true,
    pdftoolbar=true,
    pdfmenubar=true,
    pdffitwindow=false,
    pdfstartview={FitH},
    pdftitle={From Explaining to Enabling: Actionable Transparency in Algorithmic Insurance Pricing},
    pdfauthor={Agathe Fernandes Machado, Fei Huang},
    pdfkeywords={explainable AI, counterfactual explanations, algorithmic transparency, conformal prediction, insurance pricing, actionable recourse},
    pdfnewwindow=true,
    colorlinks=true,
    linkcolor=black,
    citecolor=black,
    filecolor=black,
    urlcolor=black,
}

\author[1]{Agathe~Fernandes~Machado\thanks{PhD candidate. Email: \href{mailto:fernandes_machado.agathe@courrier.uqam.ca}{fernandes\_machado.agathe@courrier.uqam.ca}}}
\author[2]{Fei~Huang\thanks{Associate Professor. Email: \href{mailto:feihuang@unsw.edu.au}{feihuang@unsw.edu.au}}}

\affil[1]{%
    \footnotesize Département de Mathématiques\\
    Université du Québec à Montréal\\
    Montréal (UQAM), Québec, Canada
}
\affil[2]{%
    \footnotesize School of Risk and Actuarial Studies, UNSW Business School, UNSW\\
    Sydney, New South Wales, Australia
}

\usepackage[misc]{ifsym}
\makeatletter
\def\@fnsymbol#1{%
   \ifcase#1\or
   \TextOrMath {\footnotesize\Letter} \dagger\or
   \TextOrMath {\footnotesize\Letter} \dagger\or
   \TextOrMath \textdaggerdbl \ddagger \or
   \TextOrMath \textsection  \mathsection\or
   \TextOrMath \textparagraph \mathparagraph\or
   \TextOrMath \textbardbl \|\or
   \TextOrMath {\textdagger\textdagger}{\dagger\dagger}\or
   \TextOrMath {\textdaggerdbl\textdaggerdbl}{\ddagger\ddagger}\else
   \@ctrerr \fi
}
\makeatother

\usepackage{fancyhdr}
\newcommand{\authornames}{\footnotesize\textsc{Fernandes Machado, Huang}}
\usepackage{etoolbox}
\makeatletter
\patchcmd{\NAT@test}{\else \NAT@nm}{\else \NAT@nmfmt{\NAT@nm}}{}{}
\DeclareRobustCommand\citepos
  {\begingroup
   \let\NAT@nmfmt\NAT@posfmt
   \NAT@swafalse\let\NAT@ctype\z@\NAT@partrue
   \@ifstar{\NAT@fulltrue\NAT@citetp}{\NAT@fullfalse\NAT@citetp}}
\let\NAT@orig@nmfmt\NAT@nmfmt
\def\NAT@posfmt#1{\NAT@orig@nmfmt{#1's}}
\makeatother

\begin{document}

\maketitle
\thispagestyle{empty}%

\begin{abstract}
Machine learning (ML) systems increasingly make consequential decisions about individuals, including credit, hiring, and insurance. Research on ML-driven insurance pricing has asked how the pricing mechanism itself can be made accountable. Yet it remains unclear how an individual policyholder's algorithmically determined premium should be explained to them. We argue that a useful explanation must be valid, actionable, and reliable. A \emph{valid} recommendation lowers the premium according to the model. An \emph{actionable} recommendation changes only characteristics the policyholder can control. A \emph{reliable} recommendation has an actual outcome that is likely to meet the target, even in sparsely observed regions of the feature space where the model is uncertain. We formalize these three requirements as axioms and develop a method that turns an algorithmic premium into recommendations that meet them. The method meets the third requirement with a Conformal Prediction constraint, which screens out recommendations that the model is too uncertain about and thereby bounds the rate of unreliable recommendations. We apply four Counterfactual Explanation techniques to 57,832 real Florida flood-insurance claims and 98,058 synthetic driver-telematics policies. We adapt these techniques for insurance applications, so that their recommendations change only controllable characteristics and resemble policyholders observed in the data. In our illustrations, the constrained recommendations for high-risk policyholders are both more reliable and associated with larger predicted risk reductions than the unconstrained ones. These results suggest that insurers can give policyholders explanations that are both actionable and trustworthy. The work contributes to research on algorithmic accountability and offers implications for insurers and regulators.
\end{abstract}

\begin{center}
\emph{Keywords:} explainable AI, counterfactual explanations, algorithmic transparency, conformal prediction, insurance pricing, algorithmic accountability
\end{center}

\section{Introduction}
\label{sec:intro}

Machine learning (ML) models are increasingly embedded in consequential decisions about individuals, and insurance pricing is a leading example. Insurers use complex models, including tree ensembles and deep architectures, to set personalized premiums from data sources such as driver telematics \citep{Gao2018, kuo2021embeddingsattentionpredictivemodeling}. This makes pricing more accurate and more personalized, but it also makes the model's internal logic opaque to the individuals whose outcomes it determines. In Australia, for example, motor vehicle insurance premiums rose 8\% over the 2024--2025 financial year, and the Australian Securities and Investments Commission (ASIC) criticized insurers for failing to give customers clear, actionable information about what drove those increases \citep{asic2026consumers}.

The same tension recurs across data-driven organizations. Collective, model- or mechanism-level decision processes need to be linked back to individual-level understanding, accountability, and action, as in research on data-breach risk and firm-level managerial responsibility \citep{senborle2015, kashmiri2017}, individual privacy risk arising from collectively-mined data \citep{dwork2008, kifer2011, adjerid2018}, and the interpretability and fairness of AI governance more broadly \citep{rai2020, abbasi2018hbr}. \citet{zhangxu2024fairratemaking} examine this linkage problem for catastrophe insurance ratemaking. They ask how a catastrophe pool's collective risk load should be ``fairly'' allocated back to individual policies, and they derive a unique, Shapley-value-based allocation rule.

We study a complementary instance of the same phenomenon, one layer downstream of ratemaking. Whatever mechanism produced a premium, the policyholder who receives it needs an explanation that is understandable and actionable. Our research question is therefore as follows.

\begin{center}
\emph{RQ: How should an individual policyholder's algorithmically determined premium be explained to them, so that the explanation is both actionable and reliable?}
\end{center}

Research on Explainable Artificial Intelligence (XAI) in insurance pricing has mostly addressed transparency at the level of the model. It reports which features matter on average across a portfolio \citep{Delcaillau2022, henckaerts2020boostinginsightsinsurancetariff} or, for a single instance, how much each feature contributed to the prediction actually received \citep{NIPS2017_8a20a862}. Neither answers the question a policyholder is most likely to ask. What would need to change, and by how much, to obtain a lower premium? \citet{MILLER20191} argues from the social sciences that human explanation-seeking is inherently \emph{contrastive} in this sense. \citet{wachter2018counterfactual} operationalize this insight for automated decision-making as Counterfactual Explanation (CE), the smallest change to an individual's characteristics that would produce a different, preferred outcome, without requiring disclosure of a model's internal logic.

These techniques align with what \citet{wachter2018counterfactual} identify as the three purposes an explanation should serve a data subject, namely to understand a decision, to contest it, and to know what would need to change to alter it. Most XAI applied to insurance pricing addresses only the first. A Society of Actuaries report calls on practitioners to ensure that ``the model inputs and outputs can be explained to individuals impacted by the use of the model'' \citep{SOA2021}. In this paper, ``counterfactual'' refers to the fitted model and should not be interpreted as a causal claim. The generated counterfactual instance $\boldsymbol{x}'$ reflects statistical association between features and the model's output, as in most post-hoc XAI methods. It does not reproduce the causal relationships that a Structural Causal Model would capture \citep{pearl2009causality}. But, in some Counterfactual Explanation methods, it is possible to enforce known causal relationships between features \citep{Bobek2025ActionableAndDiverseCounterfactual, causalbiocf2025}.

A concrete example from our flood application (Section~\ref{subsec:demo-flood}) shows both the promise and the problem. For one policyholder, moving the insured contents to the upper floors lowers the model's predicted loss ratio from 0.46 to 0.31. Yet the model's 80\% upper bound for the predicted loss ratio after that change is 0.56, above the original 0.46 (Table~\ref{tab:flood-query-instance}), so the apparent reduction may reflect the model's uncertainty rather than an actual change in risk.

Two further considerations make this individual-level communication problem nontrivial rather than a straightforward application of existing CE methods. First, not every characteristic a CE technique might recommend changing is one an individual can act on. The algorithmic-recourse literature argues that recommendations should be restricted to controllable features \citep{linear2019ustun}. Recommending a change to an immutable characteristic such as gender is unhelpful, and it can also hold the individual responsible for who they are rather than for what they do.

Second, and specific to the reliability of recourse, behavioral evidence shows that people's willingness to rely on algorithmic advice is highly sensitive to whether that advice subsequently proves correct. People who observe an algorithm error even once become substantially less willing to trust it again \citep{dietvorst2015algorithm}. Absent such disconfirming evidence, algorithmic recommendations are often over-trusted relative to human judgment \citep{logg2019algorithm}. This is especially consequential for high-risk policyholders, who disproportionately, though not universally, lie in sparsely observed, high-uncertainty regions of the feature space. A CE technique's ordinary proximity objective can therefore recommend changes whose predicted premium reduction is an artifact of model uncertainty rather than of an actual reduction in risk.

\subsection{Research Objective and Contributions}

We develop an approach to insurer--policyholder communication that produces valid, actionable, and reliable Counterfactual Explanations (CEs) of an ML-based premium. Validity (Axiom~\ref{ax:validity} in Section~\ref{subsec:notation}) comes from the CE search itself, which looks for changes that bring the model's predicted premium into a predefined desired range. Actionability (Axiom~\ref{ax:actionability}) comes from restricting recommendations to features a policyholder can control. Reliability (Axiom~\ref{ax:reliability}) comes from screening each recommendation against the model's own uncertainty, so that a predicted premium reduction is not an artifact of a region where the model has little data.

To enforce reliability, we build on Conformal Prediction (CP) \citep{vovk2022algorithmic}. For any underlying model, CP constructs a prediction interval that contains the actual outcome with at least a chosen probability, on average over the population. We formulate a CP-constrained CE search for the regression models used in insurance, which predict claim frequency (how often claims occur) and severity (how large they are). We show formally that this constraint bounds the rate at which profiles (combinations of characteristics) are accepted as reliable yet have an actual outcome above the target. The bound holds when the generated instances $\boldsymbol{x}'$ produced by CE techniques are drawn from the same population as the hold-out calibration dataset used to estimate CP intervals in practice. We also characterize the proximity cost that this constraint imposes. In our illustrations, the constrained recommendations are more trustworthy and are associated with larger predicted risk reductions.

This paper makes three contributions to the Information Systems (IS) literature. First, we contribute to the literature on algorithmic accountability, which concerns whether individuals affected by an algorithmic decision can understand, contest, and act on it. We show that an explanation can be valid and actionable and still fail the individual, because its predicted benefit may not materialize. We formalize this requirement as reliability and show that it does not follow from validity or actionability. Prior work addresses it only for classification and without actionability constraints. Second, we contribute to the literature on XAI and algorithmic transparency by showing how reliability can be enforced and what it costs for regression models. A Conformal Prediction constraint bounds the rate of unreliable recommendations. The constraint can never reduce, and may increase, how much a policyholder must change. Third, we contribute to research on AI-enabled insurance pricing by proposing explanations tailored to individual policyholders rather than to regulators or for insurance company governance. We show, with flood-insurance claims and driver-telematics data, that most valid recommendations are not reliable. In the telematics test sample, only 0.2\% to 3.8\% of the recommendations of the four techniques meet the reliability constraint at 80\% confidence, and the techniques differ in how readily they produce reliable recommendations. Insurers can use these results to choose among techniques and to judge when a recommendation can be trusted.

The remainder of the paper proceeds as follows. Section~\ref{sec:related-work} reviews related literature. Section~\ref{sec:ce-insurance} develops the formal approach, from the axioms and the four CE techniques to the CP constraint and its properties. Section~\ref{sec:demonstration} applies the approach to real flood-insurance claims data and a synthetic driver-telematics dataset. Section~\ref{sec:discussion} discusses implications and limitations, and Section~\ref{sec:conclusion} concludes. Proofs and further technical and experimental details are in the appendices.

\section{Related Work}
\label{sec:related-work}

We draw on three streams of research. The first concerns individual-level accountability for algorithmic decisions; the second, explainability in insurance pricing; and, the third, Counterfactual Explanations and their treatment of uncertainty.

\subsection{Individual-Level Accountability for Algorithmic Decisions}
\label{subsec:related-is}

The need to link model-level decisions back to individual-level understanding and action recurs across a broad range of IS phenomena and predates the current wave of XAI research by several decades. We refer to this need as individual-level accountability for algorithmic decisions, meaning that affected individuals can understand, contest, and act on them. \citet{gregorbenbasat1999} lay the theoretical foundations for explanation facilities in intelligent systems. They argue that explanations serve distinct functions, including learning, increasing user confidence and acceptance, and enabling users to judge when a system's advice should and should not be followed, and that the appropriate content and form of an explanation depend on the task the user faces. Their theory anticipates the distinction between explanations that build understanding and explanations that enable action. Their emphasis on matching explanation form to user task is a direct antecedent of the contrastive, actionable framing we adopt from the more recent XAI literature.

Behavioral research examines how people update their trust in algorithmic judgment. \citet{dietvorst2015algorithm} show that even a single observed algorithmic error sharply reduces subsequent reliance, a phenomenon termed algorithm aversion. \citet{logg2019algorithm} show that, absent such evidence, people often over-rely on algorithmic relative to human judgment. Procedural-justice research shows that perceptions of fair treatment depend on whether decisions are explained in a way that acknowledges the affected party's ability to respond \citep{lindtyler1988}. \citet{MILLER20191} synthesizes evidence that explanation-seeking is inherently contrastive, and the algorithmic-recourse literature formalizes actionability as a design requirement that distinguishes controllable from immutable characteristics \citep{linear2019ustun}. This body of work implies that an individual-level explanation is incomplete if it is not both contrastive and actionable, and that it can undermine institutional trust if its reliability is not communicated or ensured.

Insurance pricing is a suitable setting for three reasons. First, regulation motivates explanation, although its legal force varies by jurisdiction and insurance line. The GDPR protects data subjects against solely automated decisions with legal or similarly significant effects (Article 22), and Recital 71 has been read, although the reading remains debated, as supporting a right to an explanation \citep{GDPR2016}. The EU AI Act classifies AI systems used for risk assessment and pricing as high-risk, but only for life and health insurance \citep{eu_ai_act_annex3}. The GDPR is therefore the more directly applicable motivation for our property and casualty applications.

Second, the stakes of actionable recourse are high for high-risk policyholders, given climate-driven affordability pressure in lines such as flood and wildfire insurance \citep{affordability2016usreports}, where mitigation actions are known to lower modeled catastrophe risk \citep{brinkmann2022catastrophe}.

Third, claim frequency and severity are modeled by regression, a setting often overlooked in the CE and algorithmic-recourse literatures, which have been developed primarily for binary decisions such as loan approval.

\subsection{Machine Learning-Driven Insurance Pricing}

Research applying XAI to insurance pricing models, largely published in actuarial science, has focused mostly on insurer--regulator communication and internal model governance \citep{xin2024trustxai, Kuo2023Towards, Delcaillau2022, risks9070126}. It reports feature rankings, importance values, or visual diagnostics that describe the model and say nothing about what an individual could do differently. \citet{henckaerts2020boostinginsightsinsurancetariff} discuss transparency for all pricing-model stakeholders, including policyholders, but rely on global variable-importance and partial dependence plots. A related thread develops regression-based fairness criteria aligned with antidiscrimination regulation \citep{Xin02042024}. That work asks whether a prediction is statistically fair across protected groups. We ask whether the individual who received the prediction can understand and act on it, a problem that persists even when a pricing model satisfies such criteria.

The contribution of \citet{zhangxu2024fairratemaking} operates at the level of the risk-load allocation mechanism. They show, drawing an analogy to fair data valuation in ML \citep{Ghorbani2019}, that a Shapley-value-based allocation is the unique method satisfying three normatively motivated axioms, and that several allocation methods used in practice have disparate impact on minority-owned properties. We study a problem one layer downstream, namely how the premium that such a mechanism produces should be explained to the policyholder who receives it. We return to the relationship between these two layers in Section~\ref{sec:discussion}.

\subsection{Why Counterfactual Explanations}
\label{subsec:why-ce}

Section~\ref{subsec:related-is} argued that an individual-level explanation is incomplete if it is not contrastive and actionable. To show why Counterfactual Explanation (CE) fits this purpose, we place it within the taxonomy of ML interpretability methods \citep{molnar2025}. We require a post-hoc method, because insurers are unlikely to replace validated, higher-performing pricing models with intrinsically interpretable ones solely to communicate with policyholders. We require a model-agnostic method, so that the approach generalizes across insurers and model choices, and a local method, because our objective is to explain one policyholder's premium to that policyholder. Many methods meet these three requirements, including SHAP and LIME. Appendix~\ref{sec:ec-interpretability} compares them and shows that they describe the prediction actually received, and are neither contrastive nor restricted to features the individual can control. Appendix~\ref{sec:appendix-flood-data} illustrates the difference for one flood policyholder. We therefore adopt CE as the basis for the approach developed in Section~\ref{sec:ce-insurance}. CE was not originally designed, however, for the regression, actionability-constrained, uncertainty-aware form that the insurance setting requires.

\subsection{Validity and Reliability of Individual-Level Recommendations}
\label{subsec:reliability-notions}

A CE recommendation can satisfy its own optimization objective and still fail the policyholder. The recommended change may lower the model's predicted premium without lowering the outcome the policyholder would actually experience. We therefore distinguish two properties that CE communication tends to treat as one, which correspond to Axioms~\ref{ax:validity} and~\ref{ax:reliability} in Section~\ref{subsec:notation}. \emph{Recommendation validity} asks whether following the recommended change would, according to the fitted model $h$, produce the stated lower prediction, and any CE technique that solves its optimization problem exactly guarantees this by construction. \emph{Recommendation reliability} asks whether that predicted reduction can be trusted, meaning that the actual outcome at the recommended change is likely to fall in the target range. Validity concerns the model's output, and reliability concerns the evidence behind that output. CE techniques often search for the closest instance satisfying validity, and high-risk policyholders, for whom recourse is most consequential, disproportionately though not exclusively occupy sparse, high-uncertainty regions. Validity and reliability can therefore diverge where the stakes are highest.

\citet{ChristodoulouSun2026arxiv2602} show, studying CE sensitivity to both aleatoric 
and epistemic uncertainties,  
that prediction error alone is an insufficient basis for assessing a counterfactual's reliability, reinforcing the need for an explicit uncertainty-aware constraint rather than relying on point-prediction accuracy. Recent work incorporates predictive uncertainty into CE search for classification tasks using Conformal Prediction \citep{cpaaai2024ce, adams2025individualisedcounterfactualexamplesusing, bilkhoo2025confexuncertaintyawarecounterfactualexplanations} and, relatedly, by favoring counterfactuals in high training-density regions \citep{face2020, sequentialCE2025, ijcai2022p104, ijcai2020p395}. 

A related literature, Explainable Outlier Detection, flags anomalous instances and determines how they could be modified to become inliers \citep{sulem2022diversecounterfactualexplanationsanomaly, ZHOU2025111532, subspacedensity2023} (see \citealp{surveyanomaly2023} for a survey). High-risk policyholders often resemble such outliers, but our constraint does not require a flagging step, and it pursues a reliable premium reduction rather than the restoration of typicality. Closer to our setting, concurrent work generates counterfactuals for conformal regression intervals \citep{maalej2026conformal}, evaluating them on plausibility and prediction difficulty and not on actionability or a reliability property. We adopt the Conformal Prediction (CP) approach and extend it to the actionability-constrained regression setting required for insurance frequency and severity models, formalized in Section~\ref{sec:ce-insurance}.

\section{Counterfactual Explanations for Insurance Pricing}
\label{sec:ce-insurance}

This section formalizes what actionable and reliable recourse requires. A Counterfactual Explanation (CE) tells a policyholder which changes to controllable characteristics would bring the model's predicted outcome into a target range. We require three things of such a recommendation. The model must predict that it works (validity), it must change only what the policyholder can change (actionability), and the prediction must be trustworthy, meaning that the actual outcome is also likely to be in the target range (reliability). Existing techniques can be made to meet the first two, but nothing in them addresses the third. This matters most for high-risk policyholders, and we show how a Conformal Prediction (CP) constraint addresses it, at the price of recommendations that tend to change more features. We first define the notation and the three axioms, then compare candidate techniques, illustrate the gap between validity and reliability. To address this gap, we develop the CP constraint, and show how it enters each CE technique.

\subsection{Notation and Definitions}
\label{subsec:notation}

CE techniques were developed primarily for classification tasks, where a query instance $\boldsymbol{x}$ with an unfavorable predicted label seeks a counterfactual instance $\boldsymbol{x}'$ with a flipped, favorable label \citep{8807255, russell2029diverseCE, face2020}. Applying CE to insurance frequency and severity models, which are continuous or count-valued regression outcomes, requires generalizing the target from a single label $y^c$ to a desired range $Y'$. As discussed in Section~\ref{subsec:related-is}, it also requires restricting recommendations to controllable features, which certain existing CE implementations do not natively support. We formalize these requirements below. Throughout, a \emph{recommendation} is a counterfactual instance presented to a policyholder as a suggested set of changes.

Let $h : \mathcal{X} \to \mathcal{Y}$, with $\mathcal{X} \subset \mathbb{R}^d$ and $\mathcal{Y} \subset \mathbb{R}_+$, denote a frequency or severity model mapping a policyholder's characteristics $\boldsymbol{X}$ to a predicted outcome $Y$. Let $\boldsymbol{x} \in \mathbb{R}^d$, drawn from the distribution of $\boldsymbol{X}$, denote the observed characteristics of a query policyholder, and let $\mathcal{D}_{\text{train}} = \left(\boldsymbol{X}_i, Y_i\right)_{i=1}^{n_{\text{train}}}$, $\mathcal{D}_{\text{cal}}= \left(\boldsymbol{X}_i, Y_i\right)_{i=1}^{n_{\text{cal}}}$, $\mathcal{D}_{\text{test}}= \left(\boldsymbol{X}_i, Y_i\right)_{i=1}^{n_{\text{test}}}$ denote, respectively, the training (with $n_{\text{train}}$ observations), calibration (with $n_{\text{cal}}$ observations), and test sets (with $n_{\text{test}}$ observations).

\begin{definition}[Counterfactual Explanation for Regression]
A Counterfactual Explanation for the individual $\boldsymbol{x} \in \mathbb{R}^d$ consists of an instance $\boldsymbol{x}' \in \mathbb{R}^d$ that differs minimally from $\boldsymbol{x}$ such that its prediction $h(\boldsymbol{x}')$ belongs to a desired predefined range $Y'$. The minimal objective of CE techniques follows the optimization problem \citep{wachter2018counterfactual}
\begin{equation}
\label{eq:min-ce}
    \arg \min_{\boldsymbol{x}' \in \mathbb{R}^d} l\left(h(\boldsymbol{x}'), Y'\right) + C_1 \cdot \theta(\boldsymbol{x}', \boldsymbol{x}),
\end{equation}
with the loss \citep{Dandl_2020}
$$
l\left(h(\boldsymbol{x}'),Y'\right) := \begin{cases}
0, & h(\boldsymbol{x}') \in Y', \\
\min_{y' \in Y'} \left|h(\boldsymbol{x}')-y'\right|, & \text{otherwise},
\end{cases}
$$
where $\theta$ is a proximity penalty (e.g., the Gower distance, which handles mixed numerical and categorical features), and $C_1 \in \mathbb{R}_+^*$.
\end{definition}

Solutions to Equation~\ref{eq:min-ce} balance \emph{validity} ($h(\boldsymbol{x}') \in Y'$) and \emph{proximity} by construction. 
The broader CE literature incorporates further properties through additional regularization terms \citep{wachter2018counterfactual, dice2020, russell2029diverseCE, Guidotti2024Counterfactual}, and some work derives the CE optimization problem in closed form for GLMs \citep{artelt2019glm}. These properties include sparsity (the number of features changed, captured by an $L_0$ norm in $\theta$), plausibility, causality, multiplicity and diversity, and robustness.

These properties are quality dimensions that a practitioner trades off against one another, and several conflict, for example sparsity and plausibility. We call them \emph{desirable properties}. They are worth optimizing where possible, but, unlike the axioms introduced below, a procedure does not have to meet them to count as recourse at all. Sparsity is the number of features changed. Proximity is the distance between $\boldsymbol{x}'$ and $\boldsymbol{x}$ under $\theta$. Plausibility means that $\boldsymbol{x}'$ lies on the data manifold of $\mathcal{D}_{\text{train}}$, which we approximate by the Gower distance to the nearest training instance. It matters because the model's prediction at a point far from the training data is an extrapolation, so a recommendation there is less credible. Causality means that the recommended changes respect the causal relationships among features. Multiplicity means that several, mutually distinct counterfactual instances $\boldsymbol{x}'_1, \ldots, \boldsymbol{x}'_M$ are returned, and diversity is the spread among them. Robustness means that a recommendation remains valid under small perturbations of the query or the model. It is outside the scope of our comparison. Appendix~\ref{sec:ec-landscape} compares these properties across techniques. We isolate three axioms, the minimal requirements that a CE procedure must satisfy for a policyholder to be able to act on its output.

A \emph{CE procedure} $\mathcal{P}$ maps a model $h$, a query instance $\boldsymbol{x}$, and a target range $Y' = [0, y^u]$ to $M > 0$ instances $\boldsymbol{x}'_1, \ldots, \boldsymbol{x}'_M$. When $M > 1$, multiplicity is ensured, as more than one counterfactual is returned to the policyholder. The lower bound of $Y'$ is $0$ because insurance outcomes are non-negative. Let $\mathcal{I} \subseteq \{1,\ldots,d\}$ index the immutable features, and let $\mathcal{A}(\mathcal{P}) \subseteq \mathbb{R}^d$ denote the set of all counterfactual instances that $\mathcal{P}$ can return across all query policyholders. Let $(\boldsymbol{X}, Y)$ denote a policyholder drawn from the population independently of the data used to build $\mathcal{P}$, whose actual outcome $Y$ is not observed at the time of recommendation.

\begin{axiom}[Validity]
\label{ax:validity}
For all \(m \in \{1,\ldots,M\}\), the model's prediction at the recommended instance meets the target, $h(\boldsymbol{x}'_m) \in Y'$.
\end{axiom}

\begin{axiom}[Actionability]
\label{ax:actionability}
For all \(m \in \{1,\ldots,M\}\) and all $j \in \mathcal{I}$, $x'_{m,j} = x_j$.
\end{axiom}

\begin{axiom}[Reliability at level $\alpha$]
\label{ax:reliability}
The rate at which profiles in the range of the procedure have an actual outcome above the target is at most $\alpha \in (0,1)$, $\mathbb{P}\left(\boldsymbol{X} \in \mathcal{A}(\mathcal{P}),\; Y > y^u\right) \leq \alpha$.
\end{axiom}

Each axiom is a condition whose violation disqualifies a procedure as a source of recourse. A recommendation whose predicted outcome misses the target is not a counterfactual for that target (Axiom~\ref{ax:validity}). A recommendation that changes an immutable characteristic asks the policyholder to change who they are rather than what they do \citep{linear2019ustun} (Axiom~\ref{ax:actionability}). A recommendation drawn from a region in which the actual outcome frequently exceeds the target offers no recourse, because the policyholder acts on the change and the benefit may not materialize (Axiom~\ref{ax:reliability}). The first two are standard in the recourse literature. The third concerns the actual outcome rather than the model's prediction, and Section~\ref{subsec:unique-challenges} shows that the first two do not imply it. Axiom~\ref{ax:reliability} is informative only relative to the probability mass of $\mathcal{A}(\mathcal{P})$, because it implies $\mathbb{P}(Y \leq y^u \mid \boldsymbol{X} \in \mathcal{A}(\mathcal{P})) \geq 1 - \alpha / \mathbb{P}(\boldsymbol{X} \in \mathcal{A}(\mathcal{P}))$. While \emph{proximity} and \emph{plausibility} are not explicitly formalized in the axioms, we also account for them in the following, either as objectives in the optimization problem for CEs or as post-hoc filtering constraints.


\subsection{Comparison of Candidate Counterfactual Explanation Techniques}
\label{subsec:comparison}

Section~\ref{subsec:notation} separates three axioms, which are required, from desirable properties.
Furthermore, most existing CE techniques were developed for classification; however, in the context of insurance pricing, they must handle regression tasks to predict frequency and severity. Appendix~\ref{sec:ec-landscape} compares 13 representative techniques against the three axioms, the desirable properties, and applicability to regression. To our knowledge, none of them bounds the rate of unreliable recommendations (Axiom~\ref{ax:reliability}), and only MOC combines actionability, plausibility, multiplicity, and regression in its published form.

We therefore retain four techniques for the remainder of the paper. They represent four different search strategies, namely nearest observed instances, sequential modification, multi-objective genetic search, and diversity optimization. Public implementations of all four handle regression and return several counterfactuals per query (namely, \emph{multiplicity}), and all four can be made to satisfy Axiom~\ref{ax:actionability} with a modest extension. We extend the first two to enforce actionability, which their published implementations lack, and add a post-hoc plausibility filter to DiCE. WhatIf \citep{8807255} returns the closest matching training instances with predictions in $Y'$. NICE \citep{prototype2024brughmans} sequentially replaces individual feature values of $\boldsymbol{x}$ with values from a nearby valid instance. MOC \citep{Dandl_2020} uses a genetic algorithm (the non-dominated sorting genetic algorithm II, NSGA-II) to optimize the potentially conflicting objectives of validity, proximity, sparsity, and plausibility, without requiring convexity or differentiability \citep{goldberg1989genetic}. Pareto dominance, under which one candidate is preferred only if it is at least as good on every objective and better on one, removes the need to specify the objectives' relative weights in advance. DiCE \citep{dice2020} explicitly optimizes diversity across $M$ recommendations.


The four techniques solve different optimization problems. Once extended, all four satisfy Axiom~\ref{ax:actionability} and, subject to an implementated post-hoc filter for MOC, Axiom~\ref{ax:validity} (Table~\ref{tab:ce-properties}), but they achieve validity through different mechanisms. Combining the multiplicity property with the use of several techniques also gives a policyholder more counterfactual instances to consider, because the ease of implementing a change may vary across policyholders. Appendix~\ref{sec:ec-formulations} gives the optimization problem that each technique solves.


\begin{table}[!htbp]
\centering
\small
\setlength{\tabcolsep}{4pt}
\begin{tabular}{l c c c c c c c}
\toprule
& \textbf{Axiom~1} & \textbf{Axiom~2} & \multicolumn{2}{c}{\textbf{Axiom~3}} & \multicolumn{3}{c}{\textbf{Desirable properties}}\\
\cmidrule(lr){2-2}\cmidrule(lr){3-3}\cmidrule(lr){4-5}\cmidrule(lr){6-8}
Method & Valid. & Action. & \multicolumn{2}{c}{Reliab.} & Plausib. & Causal. & Multip.\\
\cmidrule(lr){4-5}
& & & Uncon. & CP filter & & &\\
\midrule
WhatIf \citep{8807255} & \checkmark & \checkmark$^\dagger$ & $\times$ & (\checkmark)$^\dagger$ & \checkmark & $\times$ & \checkmark \\
NICE \citep{prototype2024brughmans} & \checkmark & \checkmark$^\dagger$ & $\times$ & (\checkmark)$^\dagger$ & \checkmark & $\times$ & \checkmark \\
MOC \citep{Dandl_2020} & (\checkmark) & \checkmark & $\times$ & (\checkmark)$^\dagger$ & \checkmark & $\times$ & \checkmark \\
DiCE \citep{dice2020} & \checkmark & \checkmark & $\times$ & (\checkmark)$^\dagger$ & \checkmark$^\dagger$ & $\times$ & \checkmark \\
\bottomrule
\end{tabular}
\caption{The four retained Counterfactual Explanation techniques against the three axioms (validity, actionability, reliability) and three desirable properties (plausibility, causality, multiplicity), after our extensions. ``Uncon.'' denotes the unconstrained version of each technique. All four handle regression. \checkmark\ indicates that the property holds, (\checkmark) that it holds only through a post-hoc filter and not through the search, and $\times$ that it is not guaranteed. $\dagger$ marks a property added by this paper relative to the technique's published implementation (Appendix~\ref{sec:ec-landscape}), namely actionability for WhatIf and NICE and plausibility through a post-hoc filter for DiCE. The CP filter column applies the CP constraint of Equation~\ref{eq:cp-ce}, and may return no instance when the hard constraint is infeasible. Axiom~3 then holds by Theorem~\ref{thm:sufficiency}.}
\label{tab:ce-properties}
\end{table}

We extend the R implementations of WhatIf and NICE in the \texttt{counterfactuals} package \citep{JSSv115i09} so that candidates leave immutable features unchanged relative to $\boldsymbol{x}$. Appendix~\ref{sec:ec-formulations} describes the two extensions. For DiCE, actionability is natively supported by the underlying implementation, and we add a post-hoc plausibility filter retaining only counterfactuals within a bounded Gower distance of their nearest training neighbor.

No single technique dominates on all criteria. We therefore apply several techniques in Section~\ref{sec:demonstration} and report the effect of the reliability constraint introduced in Section~\ref{subsec:cp-constraint} across all of them.

Table~\ref{tab:method-features} summarizes the main features, strengths, and limitations of the four techniques. The strengths and limitations that rest on performance come from the telematics comparison in Section~\ref{subsubsec:comparison-telematics} and should be read as evidence from a single dataset.

\begin{table}[!htbp]
\centering
\small
\begin{tabularx}{\textwidth}{l >{\raggedright\arraybackslash}p{3.3cm} >{\raggedright\arraybackslash}X >{\raggedright\arraybackslash}X}
\toprule
Method & Search strategy & Main strengths & Main limitations\\
\midrule
WhatIf & Returns the $M$ closest training instances whose prediction lies in the target range & Plausible and valid by construction, and by far the fastest & Cannot recommend a change absent from the training data, and is less sparse than NICE and MOC\\
NICE & Starts from a nearby valid training instance and moves toward it one feature at a time & Sparse recommendations, valid by construction, and the highest share of reliable recommendations in our sample & Tied to its starting instance, and most of its recommendations are also found by WhatIf and MOC\\
MOC & Genetic algorithm (NSGA-II) that optimizes validity, proximity, sparsity, and plausibility jointly & Closest, sparsest, and most plausible recommendations in our sample, and can propose changes not observed in the data & Validity is not guaranteed by the search, very few recommendations are reliable, and the algorithm needs tuning\\
DiCE & Optimizes diversity across the $M$ recommendations & Largest and most diverse set of recommendations, and valid within the search & Highest distance and sparsity in our sample, needs a post-hoc plausibility filter, and is implemented in Python\\
\bottomrule
\end{tabularx}
\caption{Features, strengths, and limitations of the four retained CE techniques.}
\label{tab:method-features}
\end{table}


\subsection{The Validity-Reliability Gap for High-Risk Query Instances}
\label{subsec:unique-challenges}

The challenge motivating this paper arises for a high-risk query instance, one with a large observed claim frequency or severity, correspondingly located in a sparse, low-density region of the feature space. An unconstrained CE search for such an instance, driven purely by the proximity objective common to all CE techniques, tends to return counterfactuals that remain in the same sparse region surrounding the query instance, because proximity is minimized there. Yet this is the region in which the fitted model $h$ is least well-determined by data, so the model's prediction for the recommended counterfactual is itself least trustworthy. The policyholder most in need of actionable recourse is, by the structure of the problem, the one for whom an unconstrained recommendation is least likely to be reliable in the sense of Section~\ref{subsec:reliability-notions}.

We illustrate the gap with a synthetic example in which the data generating process (DGP) is known. Consider $n=510$ observations of two continuous features $X_1, X_2 \in \mathbb{R}$. Of these, 500 inliers are drawn from a standard bivariate normal distribution $\mathcal{N}(\boldsymbol{0}, I_2)$, and 10 feature-space outliers are drawn from $\mathcal{N}((4,4)^\top, 1.2^2 I_2)$, representing a small population of atypical, high-risk policyholders. For the inliers, the outcome $Y$ (an annual claim count) is Poisson with conditional mean $\mu(X_1,X_2) = \exp(-1+0.3X_1-0.4X_2)$. The outliers have their responses drawn from the upper tail of the inlier outcome distribution, which creates a stress test in which both features and outcomes are atypical. We fit a Poisson GLM $h$ on a 60\% training split and calibrate a prediction interval on a 20\% calibration split. Figure~\ref{fig:unique-challenge} applies the WhatIf technique to an outlier query instance from the remaining 20\% test split, with and without the reliability constraint of Section~\ref{subsec:cp-constraint}.

\begin{figure}[!htbp]
    \centering
    \includegraphics[width=.5\textwidth]{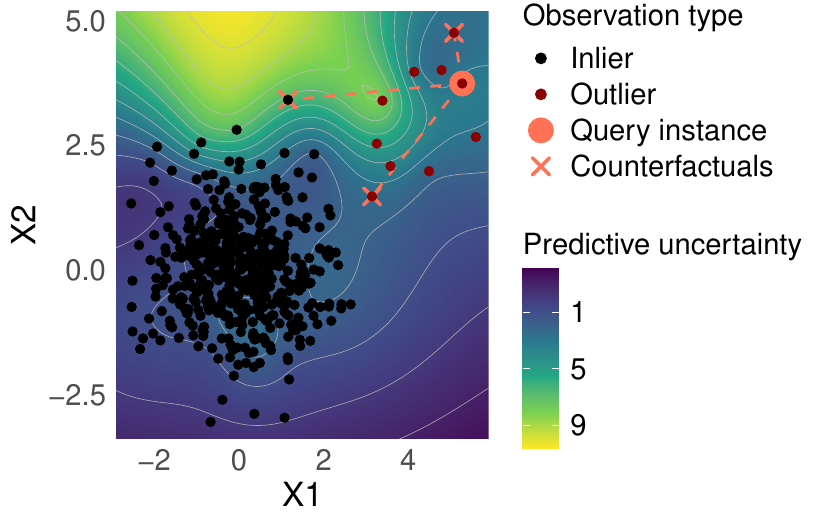}\includegraphics[width=.5\textwidth]{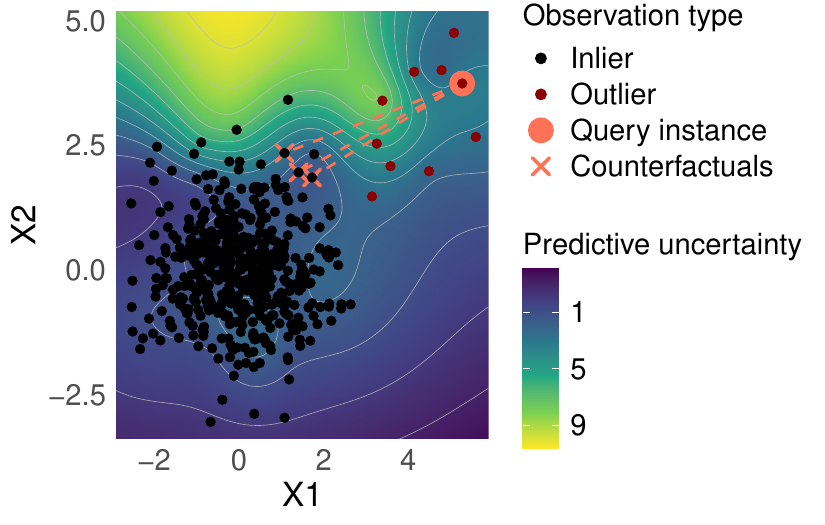}
    \caption{Counterfactual explanations generated by the WhatIf technique for a high-risk (outlier) query instance, without (left) and with (right) the reliability constraint formalized in Section~\ref{subsec:cp-constraint}. The background shading represents estimated predictive uncertainty. Without the constraint, recommended counterfactuals remain in the same high-uncertainty region as the query instance, purely because that region minimizes proximity. With the constraint, recommendations are relocated toward regions in which the model's own confidence is higher.}
    \label{fig:unique-challenge}
\end{figure}

In Figure~\ref{fig:unique-challenge}, the unconstrained counterfactuals (left panel) are valid, meaning their predictions fall within the desired range, but remain embedded in the same sparse, high-uncertainty region as the query instance, since nothing in the ordinary CE objective penalizes this. The constrained counterfactuals (right panel) are relocated toward denser, lower-uncertainty regions of the feature space, at the cost of being less proximal to the query instance. Because the underlying DGP is known in this synthetic setting, we can further confirm that the constrained recommendations correspond to an actual reduction in the true conditional claim rate $\mu(X_1,X_2)$, and not only in the fitted model's point prediction. This is the distinction between validity and reliability introduced in Section~\ref{subsec:reliability-notions}.

The gap is not specific to this example. The following result shows that a procedure can satisfy validity and actionability and still fail reliability.

\begin{proposition}[Validity and actionability do not imply reliability]
\label{prop:gap}
There exist a model $h$, a population $(\boldsymbol{X}, Y)$, and a CE procedure that satisfies Axioms~\ref{ax:validity} and~\ref{ax:actionability} and for which $\mathbb{P}(\boldsymbol{X} \in \mathcal{A}(\mathcal{P}),\, Y > y^u) = 1/4$, so that Axiom~\ref{ax:reliability} fails for every $\alpha < 1/4$.
\end{proposition}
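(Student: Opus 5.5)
The plan is an explicit construction, as simple as possible, in which every quantity can be computed by hand. I take $d = 2$, with one immutable feature $X_2$ (so $\mathcal{I} = \{2\}$) and one mutable feature $X_1$. Both features are independent Bernoulli$(1/2)$, so the population has four equally likely profiles, each of mass $1/4$. I fix the target $Y' = [0, y^u]$ with $y^u = 1$.

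The model and outcome are chosen to decouple prediction from actual outcome on one profile:
\begin{itemize}
\item the model is $h(x_1, x_2) = 2(1 - x_1)$, so its prediction depends only on the mutable feature and $h(1, x_2) = 0 \in Y'$;
\item the actual outcome is deterministic given $\boldsymbol{X}$, with $Y = 2$ on the profile $(1,0)$ and $Y = 0$ on $(1,1)$;
\item on the profiles with $x_1 = 0$, I set $Y = 2$, although these values do not matter.
\end{itemize}
The model is thus correct everywhere except at the recommended profile $(1,0)$, where it predicts $0$ but the outcome is $2 > y^u$. This mimics the sparse-region failure described in Section~\ref{subsec:unique-challenges}.

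The CE procedure $\mathcal{P}$ is defined as follows. Given a query $\boldsymbol{x} = (x_1, x_2)$, it returns the single instance $\boldsymbol{x}' = (1, x_2)$ (so $M = 1$), i.e.\ it sets the mutable feature to $1$ and leaves the immutable one unchanged.

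I then check the axioms in order.
\begin{itemize}
\item Validity (Axiom~\ref{ax:validity}) holds because $h(\boldsymbol{x}') = 0 \in [0,1]$ for every query.
\item Actionability (Axiom~\ref{ax:actionability}) holds because $x'_2 = x_2$ by construction.
\item The range of the procedure is $\mathcal{A}(\mathcal{P}) = \{(1,0), (1,1)\}$. Hence
\[
\mathbb{P}\bigl(\boldsymbol{X} \in \mathcal{A}(\mathcal{P}),\, Y > 1\bigr) = \mathbb{P}\bigl(\boldsymbol{X} = (1,0)\bigr) = 1/4,
\]
since $Y = 0$ on $(1,1)$.
\end{itemize}
Consequently, for every $\alpha < 1/4$ the inequality of Axiom~\ref{ax:reliability} fails.

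The only subtlety, and the step I would double-check, is matching the formal definitions. First, $(\boldsymbol{X}, Y)$ must be independent of the data used to build $\mathcal{P}$. This is trivial here because $\mathcal{P}$ and $h$ are fixed deterministic maps that use no data. Second, the support $\mathcal{X} \subset \mathbb{R}^2$ and the outcome space $\mathcal{Y} \subset \mathbb{R}_+$ must be respected, which they are. If a continuous or Poisson version closer to the paper's setting were preferred, the same construction works with $Y \mid \boldsymbol{X}$ random, provided the conditional exceedance probabilities average to the same $1/4$.
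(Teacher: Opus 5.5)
Your construction is correct and follows the same route as the paper's proof. Both use an explicit four-point uniform population in which the procedure sends queries to a profile of mass $1/4$ where the model predicts inside $[0,1]$ but the outcome equals $2$. The only difference is that you include a genuine immutable feature, so Axiom~\ref{ax:actionability} is checked non-vacuously and the range mixes one reliable and one unreliable profile. The paper instead takes $d=1$ with $\mathcal{I}=\emptyset$, which makes actionability trivial.
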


\begin{proof}
Let $d=1$ with no immutable feature, $\boldsymbol{X}$ uniform on $\{0,1,2,3\}$, $Y = 2$ almost surely, and $y^u = 1$. Let $h(0)=h(1)=2$ and $h(2)=h(3)=1/2$, as for a model that underestimates the outcome in a sparsely observed region. The procedure that returns $x'=2$ for every query satisfies Axiom~\ref{ax:validity}, since $h(2)=1/2 \in [0, 1]$, and Axiom~\ref{ax:actionability} trivially. Its range is $\mathcal{A}(\mathcal{P})=\{2\}$, so $\mathbb{P}(\boldsymbol{X} \in \mathcal{A}(\mathcal{P}),\, Y > y^u) = \mathbb{P}(\boldsymbol{X}=2) = 1/4$.
\end{proof}

Reliability also cannot be certified from the model alone. If a procedure never uses outcome data, the actual outcome could exceed the target throughout the region that the model deems acceptable, and the procedure would fail Axiom~\ref{ax:reliability}. A procedure that satisfies Axiom~\ref{ax:reliability} must therefore use outcome data, which is what the calibration set in CP provides.

Closing this gap does not require a new uncertainty-quantification method. CP already provides a way to construct a prediction set $\widehat{C}_{1-\alpha}(\cdot)$ with a finite-sample marginal coverage guarantee, without distributional assumptions \citep{vovk2022algorithmic}. Unlike model-specific alternatives such as Bayesian posteriors or ensembles, it applies to any fitted model. What existing CE techniques lack is a way to apply that guarantee to the recommendation itself for regression tasks. We incorporate CP as a hard (or soft) constraint on the recommendations, imposed through the candidate set for WhatIf and as a post-hoc filter for the other techniques, detailed formally in Section~\ref{subsec:cp-constraint}.

\subsection{Reliable Recourse via a Conformal Prediction Constraint}
\label{subsec:cp-constraint}

\begin{definition}[Uncertainty-Aware Counterfactual Explanations for Regression]
A set of $M$ uncertainty-aware Counterfactual Explanations for insurance frequency or severity models (regression tasks) is obtained by solving
\begin{equation}
\label{eq:cp-ce}
\begin{gathered}
        \underset{\boldsymbol{x}'_1,\ldots,\boldsymbol{x}'_M \in \mathbb{R}^d}{\arg\min} \:
        \frac{1}{M} \sum_{m=1}^M
        l\left(h(\boldsymbol{x}'_m),Y'\right)
        +
        \frac{C_1}{M}\sum_{m=1}^M
        \theta(\boldsymbol{x}'_m,\boldsymbol{x}) \\
        \text{s.t.} \quad
        \forall m,\;
        \sup\left(\widehat{C}_{1-\alpha}(\boldsymbol{x}'_m)\right)
        \leq y^u,
\end{gathered}
\end{equation}
where $y^u$ is the upper bound of the desired prediction range $Y'$ and $\widehat{C}_{1-\alpha}(\cdot)$ denotes the CP prediction interval at confidence level $1-\alpha$, constructed from a held-out calibration set $\mathcal{D}_{\text{cal}}$. We treat $y^u$ as exogenous.
\end{definition}

\begin{proposition}[Reliability Bound]
\label{prop:coverage}
Let $\mathcal{A}_\alpha := \{\boldsymbol{x} \in \mathbb{R}^d : \sup(\widehat{C}_{1-\alpha}(\boldsymbol{x})) \leq y^u\}$ denote the set of profiles that satisfy the constraint in Equation~\ref{eq:cp-ce}, so that every counterfactual instance obtained by solving it lies in $\mathcal{A}_\alpha$. If $(\boldsymbol{X}, Y)$ is exchangeable with $\mathcal{D}_{\text{cal}}$, then
$$
\mathbb{P}\left(\boldsymbol{X} \in \mathcal{A}_\alpha,\; Y > y^u\right) \leq \alpha,
$$
and, whenever $\mathbb{P}(\boldsymbol{X} \in \mathcal{A}_\alpha) > 0$,
$$
\mathbb{P}\left(Y \leq y^u \mid \boldsymbol{X} \in \mathcal{A}_\alpha\right) \; \geq \; 1 - \frac{\alpha}{\mathbb{P}(\boldsymbol{X} \in \mathcal{A}_\alpha)},
$$
where the probabilities are taken over both $\mathcal{D}_{\text{cal}}$ and $(\boldsymbol{X}, Y)$.
\end{proposition}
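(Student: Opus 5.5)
The plan is to reduce the first inequality to the marginal coverage guarantee of split conformal prediction, and then derive the conditional statement by dividing by $\mathbb{P}(\boldsymbol{X} \in \mathcal{A}_\alpha)$.

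\textbf{Step 1: invoke marginal coverage.} I would state the guarantee of \citet{vovk2022algorithmic} for split CP. The interval $\widehat{C}_{1-\alpha}$ is built from scores on $\mathcal{D}_{\text{cal}}$, with $h$ fit on $\mathcal{D}_{\text{train}}$ and hence fixed conditionally on the training data. If $(\boldsymbol{X},Y)$ is exchangeable with $\mathcal{D}_{\text{cal}}$, then
\[
\mathbb{P}\bigl(Y \notin \widehat{C}_{1-\alpha}(\boldsymbol{X})\bigr) \leq \alpha,
\]
where the probability is jointly over $\mathcal{D}_{\text{cal}}$ and $(\boldsymbol{X},Y)$. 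The standard argument uses the rank of the test score among the $n_{\text{cal}}+1$ exchangeable scores and the $\lceil (1-\alpha)(n_{\text{cal}}+1)\rceil$-th order statistic. I would cite this rather than reprove it.

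\textbf{Step 2: event inclusion.} I would show that
\[
\{\boldsymbol{X} \in \mathcal{A}_\alpha,\ Y > y^u\} \subseteq \{Y \notin \widehat{C}_{1-\alpha}(\boldsymbol{X})\}.
\]
On the left event, $\sup \widehat{C}_{1-\alpha}(\boldsymbol{X}) \leq y^u < Y$, so $Y$ lies strictly above every point of the interval and hence outside it.

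\textbf{Step 3: conclude.} Monotonicity of probability together with Step 1 gives the first bound. The set $\mathcal{A}_\alpha$ is random, since it depends on $\mathcal{D}_{\text{cal}}$. This is harmless because both the event inclusion and the coverage statement hold jointly over $\mathcal{D}_{\text{cal}}$ and $(\boldsymbol{X},Y)$.

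For the second bound, whenever $p := \mathbb{P}(\boldsymbol{X}\in\mathcal{A}_\alpha) > 0$,
\[
\mathbb{P}(Y \leq y^u \mid \boldsymbol{X}\in\mathcal{A}_\alpha) = 1 - \frac{\mathbb{P}(\boldsymbol{X}\in\mathcal{A}_\alpha,\ Y>y^u)}{p} \geq 1 - \frac{\alpha}{p}.
\]

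\textbf{Remarks.} Since every solution of Equation~\ref{eq:cp-ce} lies in $\mathcal{A}_\alpha$, the range $\mathcal{A}(\mathcal{P})$ of the constrained procedure satisfies $\mathcal{A}(\mathcal{P}) \subseteq \mathcal{A}_\alpha$, so Axiom~\ref{ax:reliability} follows.

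The main obstacle is not technical but lies in being precise about what is random. The guarantee is marginal over calibration draws, not conditional on a fixed $\mathcal{D}_{\text{cal}}$. Exchangeability is also required of the population point $(\boldsymbol{X},Y)$, not of the generated counterfactuals themselves. I would state these caveats explicitly: exchangeability of test scores with calibration scores is exactly the hypothesis under which Step 1 applies, and nothing more is claimed.
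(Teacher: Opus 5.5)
Your proposal is correct and follows essentially the same route as the paper. Both proofs take three steps: the marginal split-CP coverage bound $\mathbb{P}(Y\notin\widehat{C}_{1-\alpha}(\boldsymbol{X}))\leq\alpha$, the event inclusion $\{\boldsymbol{X}\in\mathcal{A}_\alpha,\,Y>y^u\}\subseteq\{Y\notin\widehat{C}_{1-\alpha}(\boldsymbol{X})\}$ together with monotonicity of probability, and the conditional bound obtained by dividing by $\mathbb{P}(\boldsymbol{X}\in\mathcal{A}_\alpha)$.
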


Exchangeability means, roughly, that the new counterfactual policyholder is drawn from the same population as the calibration data. The proof, together with the supporting definitions of CP and exchangeability, is given in Appendix~\ref{sec:appendix-cp}. Intuitively, if a counterfactual policyholder profile is accepted by the constraint yet has an outcome above the target, that outcome lies above the upper end of an interval that was supposed to cover it. The miscoverage level $\alpha \in (0,1)$ of Split Conformal Prediction (SCP), which is the CP procedure used in this study \citep{zaffran2024uncertainty}, therefore bounds the probability of this event.

Proposition~\ref{prop:coverage} formalizes the sense in which the CP-constrained CE search is reliable, and not only valid. The constraint limits, at level $\alpha$, the rate at which counterfactual policyholder profiles are accepted as reliable yet have an actual outcome above the target. Ordinary CE validity concerns only the model's point prediction $h(\boldsymbol{x}'_m)$ and offers no statement about the actual outcome.

One limitation qualifies how the bound should be interpreted. A counterfactual is often not randomly drawn from the data distribution but is selected through the CE optimization, for example by targeting instances with low predicted frequency \citep{bilkhoo2025confexuncertaintyawarecounterfactualexplanations}. Exchangeability with the calibration set therefore cannot always be assumed for the returned counterfactual itself, and the bound concerns profiles accepted by the constraint, not the particular counterfactual returned.
To address this limitation, we use Locally Adaptive Conformal Prediction (LACP) \citep{lei2018distribution}, whose intervals adapt to local dispersion and thereby approximate conditional coverage, although they do not guarantee it (Section~\ref{sec:limitations}). To our knowledge, \citet{graziadei2025} is the only prior application of CP to claim frequency and severity models, and it does not consider counterfactual search. We are not aware of other work on CP-constrained CE for regression, including insurance pricing models.

Because the hard constraint in Equation~\ref{eq:cp-ce} can be infeasible at high confidence levels in high-uncertainty domains, we also allow a soft variant that selects, among generated candidates, those with the lowest CP upper bound. Proposition~\ref{prop:coverage} applies only to the hard constraint, and not to this soft variant. Section~\ref{sec:demonstration} reports an application in which the soft variant is required.

\begin{theorem}[Sufficiency of the CP constraint]
\label{thm:sufficiency}
Let $\mathcal{P}$ be a CE procedure that returns only instances $\boldsymbol{x}'$ with $x'_j = x_j$ for all $j \in \mathcal{I}$ and $\sup(\widehat{C}_{1-\alpha}(\boldsymbol{x}')) \leq y^u$, where $\widehat{C}_{1-\alpha}$ is the locally adaptive interval of Equation~(\ref{eq:appendix-lacp}), built from a calibration set $\mathcal{D}_{\text{cal}}$ exchangeable with $(\boldsymbol{X}, Y)$. Then $\mathcal{P}$ satisfies Axioms~\ref{ax:validity}, \ref{ax:actionability}, and~\ref{ax:reliability}.
\end{theorem}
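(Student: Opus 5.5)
The plan is to check the three axioms one at a time. Actionability holds by hypothesis, reliability follows from Proposition~\ref{prop:coverage} via set inclusion, and validity has to be extracted from the structure of the interval in Equation~(\ref{eq:appendix-lacp}).

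\emph{Actionability.} Axiom~\ref{ax:actionability} is immediate, because the procedure only returns instances with $x'_j = x_j$ for all $j \in \mathcal{I}$.

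\emph{Reliability.} Every instance the procedure can return, for any query, satisfies $\sup(\widehat{C}_{1-\alpha}(\boldsymbol{x}')) \leq y^u$. So the range $\mathcal{A}(\mathcal{P})$ is contained in the acceptance region $\mathcal{A}_\alpha$ of Proposition~\ref{prop:coverage}. Monotonicity of probability then gives
$$
\mathbb{P}\left(\boldsymbol{X} \in \mathcal{A}(\mathcal{P}),\, Y > y^u\right) \leq \mathbb{P}\left(\boldsymbol{X} \in \mathcal{A}_\alpha,\, Y > y^u\right) \leq \alpha ,
$$
where the second inequality is Proposition~\ref{prop:coverage} under the exchangeability hypothesis. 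I will also record that $\mathcal{A}_\alpha$ is determined by $\mathcal{D}_{\text{cal}}$ alone and not by $(\boldsymbol{X},Y)$. This keeps the inclusion legitimate inside the joint probability, and it requires $\mathcal{A}(\mathcal{P})$ to be defined through the calibrated interval rather than through the test point.

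\emph{Validity.} Here I use the explicit form of the LACP interval. It should be of the form
$$
\widehat{C}_{1-\alpha}(\boldsymbol{x}') = \bigl[h(\boldsymbol{x}') - \hat q\,\hat\sigma(\boldsymbol{x}'),\; h(\boldsymbol{x}') + \hat q\,\hat\sigma(\boldsymbol{x}')\bigr],
$$
possibly intersected with $\mathbb{R}_+$, with $\hat\sigma \geq 0$ and $\hat q \geq 0$. Because $\hat q$ is a conformal quantile of nonnegative normalized absolute residuals, the interval contains the point prediction. Hence
$$
h(\boldsymbol{x}') \leq \sup \widehat{C}_{1-\alpha}(\boldsymbol{x}') \leq y^u .
$$
Since $h$ maps into $\mathcal{Y} \subset \mathbb{R}_+$, we also have $h(\boldsymbol{x}') \geq 0$, and therefore $h(\boldsymbol{x}') \in [0,y^u] = Y'$, which is Axiom~\ref{ax:validity}.

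\emph{Main obstacle.} The hardest step is validity, because it depends on details of Equation~(\ref{eq:appendix-lacp}) that appear only in the appendix. I need $\hat q \geq 0$, which holds if the scores are absolute residuals divided by a nonnegative scale. I also need to handle the edge case where the quantile is infinite: if $n_{\text{cal}}$ is too small, $\hat q = +\infty$, the supremum is infinite, and the constraint is never met, so the statement holds vacuously. If the appendix uses a truncated or asymmetric interval, I would instead show directly that the upper endpoint is at least $h(\boldsymbol{x}')$. The case where no instance is returned also needs a convention: all three axioms then hold vacuously, with $\mathcal{A}(\mathcal{P})=\emptyset$ in the reliability bound.
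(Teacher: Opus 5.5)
Your proposal is correct and follows the paper's proof step for step. Actionability holds by hypothesis, validity follows from $h(\boldsymbol{x}') \leq \sup(\widehat{C}_{1-\alpha}(\boldsymbol{x}')) \leq y^u$ together with $h \geq 0$, and reliability follows from the inclusion $\mathcal{A}(\mathcal{P}) \subseteq \mathcal{A}_\alpha$ and Proposition~\ref{prop:coverage}. Your justification of the first inequality uses $\hat q \geq 0$ and $\hat\sigma \geq 0$ on the untruncated upper endpoint, which is slightly more careful than the paper's remark that the interval is centered on $h(\boldsymbol{x}')$, since that is not literally true after truncation at $0$.
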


\begin{proof}
Axiom~\ref{ax:actionability} holds by the restriction on the returned instances. For Axiom~\ref{ax:validity}, the interval $\widehat{C}_{1-\alpha}(\boldsymbol{x}')$ is centered on $h(\boldsymbol{x}')$, so $h(\boldsymbol{x}') \leq \sup(\widehat{C}_{1-\alpha}(\boldsymbol{x}')) \leq y^u$, and $h(\boldsymbol{x}') \geq 0$ because the model predicts non-negative outcomes. For Axiom~\ref{ax:reliability}, $\mathcal{A}(\mathcal{P}) \subseteq \mathcal{A}_\alpha$, so $\mathbb{P}(\boldsymbol{X} \in \mathcal{A}(\mathcal{P}),\, Y > y^u) \leq \mathbb{P}(\boldsymbol{X} \in \mathcal{A}_\alpha,\, Y > y^u) \leq \alpha$ by Proposition~\ref{prop:coverage}.
\end{proof}

All four techniques with the CP filter of Table~\ref{tab:cp-incorporation} satisfy the conditions of Theorem~\ref{thm:sufficiency}.

Reliability is not free. Restricting the search to low-uncertainty regions can only make the underlying proximity objective weakly harder to satisfy, and we formalize this trade-off and its dependence on the confidence level.


\begin{proposition}[Cost of Reliability]
\label{prop:cost}
Let
$$
v^\star_{\text{unc}} := \min_{\boldsymbol{x}'_1,\ldots,\boldsymbol{x}'_M \in \mathbb{R}^d} \; \frac{1}{M}\sum_{m=1}^M l\left(h(\boldsymbol{x}'_m),Y'\right) + \frac{C_1}{M}\sum_{m=1}^M \theta(\boldsymbol{x}'_m,\boldsymbol{x})
$$
denote the optimal value of the unconstrained CE problem (the $M$-instance form of Equation~\ref{eq:min-ce}), and let $v^\star_{\text{CP}}(\alpha)$ denote the optimal value of the CP-constrained problem in Equation~\ref{eq:cp-ce} with feasible set $\mathcal{F}_\alpha \subseteq (\mathbb{R}^d)^M$, which additionally requires $\sup(\widehat{C}_{1-\alpha}(\boldsymbol{x}'_m)) \leq y^u$ for all $m \in \{1,\ldots,M\}$, with $v^\star_{\text{CP}}(\alpha) = +\infty$ if $\mathcal{F}_\alpha = \emptyset$. For a fixed calibration set and $\alpha' \leq \alpha$,
$$
v^\star_{\text{unc}} \; \leq \; v^\star_{\text{CP}}(\alpha) \; \leq \; v^\star_{\text{CP}}(\alpha'),
$$
and the first inequality is an equality if and only if some unconstrained minimizer already satisfies the CP reliability constraint.
\end{proposition}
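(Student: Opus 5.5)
The argument rests on the monotonicity of constrained minimization in its feasible set. Write $F(\boldsymbol{x}'_1,\ldots,\boldsymbol{x}'_M)$ for the common objective of the unconstrained problem and of Equation~\ref{eq:cp-ce}. Then $v^\star_{\text{unc}} = \inf_{(\mathbb{R}^d)^M} F$ and $v^\star_{\text{CP}}(\alpha) = \inf_{\mathcal{F}_\alpha} F$, with the convention $\inf \emptyset = +\infty$. I will read the minima as infima throughout, so that existence of minimizers matters only for the equality clause.

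The first step is to show $\mathcal{F}_{\alpha'} \subseteq \mathcal{F}_\alpha \subseteq (\mathbb{R}^d)^M$ for $\alpha' \leq \alpha$ and a fixed calibration set. The second inclusion is by definition. For the first, I will use the explicit form of the locally adaptive interval in Equation~(\ref{eq:appendix-lacp}). With the calibration set fixed, the interval is $h(\boldsymbol{x}) \pm \hat q_{1-\alpha}\,\hat\sigma(\boldsymbol{x})$, where $\hat q_{1-\alpha}$ is the $\lceil (n_{\text{cal}}+1)(1-\alpha)\rceil$-th smallest normalized calibration residual. Smaller $\alpha$ gives a larger rank and hence a weakly larger empirical quantile. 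Since $\hat\sigma \geq 0$, this yields $\sup \widehat{C}_{1-\alpha'}(\boldsymbol{x}) \geq \sup \widehat{C}_{1-\alpha}(\boldsymbol{x})$ pointwise, so $\mathcal{A}_{\alpha'} \subseteq \mathcal{A}_\alpha$. Taking $M$-fold products gives the inclusion.

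This nestedness is the only step where the conformal construction actually enters. It is also where I expect the main obstacle. I need to handle the edge case where the rank exceeds $n_{\text{cal}}$: there the quantile is $+\infty$, so $\mathcal{A}_{\alpha'}$ is empty and the inequality holds with value $+\infty$. I also need to make sure the residual scaling $\hat\sigma$ does not depend on $\alpha$, which I will check against the appendix definition.

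Given the nested sets, the chain $v^\star_{\text{unc}} \leq v^\star_{\text{CP}}(\alpha) \leq v^\star_{\text{CP}}(\alpha')$ follows because an infimum over a subset is at least the infimum over the superset. The empty-set convention covers the infeasible cases.

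The last step is the equality characterization.
\begin{itemize}
\item[($\Leftarrow$)] Suppose some unconstrained minimizer $\boldsymbol{x}^\star \in \mathcal{F}_\alpha$. Then $v^\star_{\text{CP}}(\alpha) \leq F(\boldsymbol{x}^\star) = v^\star_{\text{unc}}$, and combined with the first inequality this gives equality.
\item[($\Rightarrow$)] Suppose $v^\star_{\text{CP}}(\alpha) = v^\star_{\text{unc}} < \infty$. Here I need the constrained minimum to be attained at some $\tilde{\boldsymbol{x}} \in \mathcal{F}_\alpha$, which the statement implicitly assumes by writing $\min$ and $\arg\min$ in Equation~\ref{eq:cp-ce}. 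Then $F(\tilde{\boldsymbol{x}}) = v^\star_{\text{unc}}$, so $\tilde{\boldsymbol{x}}$ is itself an unconstrained minimizer and satisfies the CP constraint.
\end{itemize}
I will state this attainment assumption explicitly. Without it, equality of the infima need not produce such a minimizer.
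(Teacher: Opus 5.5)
Your proposal is correct and follows the paper's proof. Both arguments get $\mathcal{F}_{\alpha'} \subseteq \mathcal{F}_\alpha$ from the monotonicity of the empirical quantile in $\alpha$ together with $\hat\sigma \geq 0$, use the fact that minimizing the shared objective over a smaller set cannot lower the optimum, and for the equality clause rely on attainment of the constrained minimum. The paper uses that attainment only implicitly, so you are right to state it as an explicit assumption.
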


The proof is in Appendix~\ref{sec:ec-cost-proof}. Proposition~\ref{prop:cost} says that whenever an unconstrained recommendation already happens to satisfy the reliability constraint, meaning the model is already confident in that region, reliability is obtained at no additional proximity cost. Because the CP interval is widest in sparse, high-uncertainty regions, the constraint is therefore most likely to bind, and impose a cost, for the high-risk, sparsely observed query instances motivating this paper (Section~\ref{subsec:unique-challenges}), and to be inert where the model is already confident. The monotonicity in $\alpha$ also implies a cost--reliability frontier, which Table~\ref{tab:reliability_comparison} illustrates for the telematics data. When both problems attain validity ($l=0$), the inequality in Proposition~\ref{prop:cost} is a weakly larger proximity cost. This is consistent with the uncertainty-aware recommendations in Table~\ref{tab:flood-query-instance} changing at least as many features as their unconstrained counterparts, and more in most cases.

\subsection{Incorporating the Constraint into the Four CE Techniques}
\label{subsec:cp-incorporation}

For WhatIf, the candidate set is redefined to retain only training instances whose CP upper bound is at most $y^u$. For the three remaining CE techniques, NICE, MOC, and DiCE, candidates are generated without this constraint, and a post-hoc filter then retains those that satisfy it. As future work, NICE could be extended so that the initial candidate is drawn from the redefined reliable set used by WhatIf. For both MOC and DiCE, incorporating the CP constraint as a fifth objective within the genetic algorithm is a possible extension. Table~\ref{tab:cp-incorporation} summarizes these mechanisms.

\begin{table}[!htbp]
\centering
\small
\begin{tabularx}{\textwidth}{l X}
\toprule
Method & How the CP constraint is incorporated\\
\midrule
WhatIf & Candidate set $\mathcal{D}_{\text{train}}'$ additionally requires $\sup(\widehat{C}_{1-\alpha}(\boldsymbol{x}')) \leq y^u$\\
NICE & Post-hoc filter over generated candidates\\
MOC & Post-hoc filter over generated candidates\\
DiCE & Post-hoc filter over generated candidates\\
\bottomrule
\end{tabularx}
\caption{How the CP reliability constraint (Axiom~\ref{ax:reliability}) of Equation~\ref{eq:cp-ce} is incorporated into each of the four CE techniques of Table~\ref{tab:ce-properties}.}
\label{tab:cp-incorporation}
\end{table}


\section{Empirical Illustration}
\label{sec:demonstration}

This section applies the approach to two datasets, one real and one synthetic, and compares the four CE techniques with and without the CP constraint.

The two datasets play different roles. The flood data are real, the outcome is a severity, and predictive uncertainty is high, so only the soft constraint is feasible. The telematics data are synthetic, the outcome is a frequency, and predictive uncertainty is lower, so the hard constraint can be used and the four techniques can be compared across a test sample. For all four retained CE techniques (WhatIf, NICE, MOC and DiCE), the hyperparameter values, their justifications, and comments on tuning for other datasets are reported in Appendix~\ref{sec:ce-parameters}. Common parameters are matched across methods whenever possible, particularly for the MOC and DiCE genetic algorithms. In the following, each dataset is randomly split into a training set (60\%), a calibration set (30\%), and a test set (10\%), from which query instances are selected.







\subsection{Flood Insurance}
\label{subsec:demo-flood}


We apply the four CE techniques, with and without the CP constraint of Equation~\ref{eq:cp-ce}, to severity predictions from a gradient-boosted tree model (XGBoost, $h$: the RMSE is 0.195 on $\mathcal{D}_{\text{test}}$). The model is fitted on FEMA's \emph{FIMA NFIP Redacted Claims} dataset\footnote{Federal Emergency Management Agency (FEMA), OpenFEMA Dataset: \textit{FIMA NFIP Redacted Claims - v2}. Retrieved from https://www.fema.gov/openfema-data-page/fima-nfip-redacted-claims-v2 on July 13, 2026, at 11:19 PM EST. This product uses the FEMA OpenFEMA API but is not endorsed by FEMA. The Federal Government and FEMA cannot vouch for the data or analyses derived from these data after they have been retrieved from the Agency's website(s).}. We restrict the data to $n=57{,}832$ Florida flood claims from three events between 2023 and 2024, for which reported amounts are sufficiently mature and unlikely to still be developing (data access and full variable descriptions are provided in Appendix~\ref{sec:appendix-flood-data}). By construction, these records correspond exclusively to claims with positive losses, with no non-claiming policies represented. The outcome is the loss ratio (LR), the damage amount over insured value, conditional on a claim. We refer to the model as a severity model and report severity in loss-ratio units.

Policy duration is unavailable because the dataset has no unique policy identifier, so we assume an exposure of one year for each policy and year. We also assume that 100\% of each property is insured. Predicted severity could be multiplied by the coverage percentage to obtain real-world severity values.

Of the eleven building and location characteristics retained as predictors, three are actionable. They are whether the building is elevated (\emph{elevation}), the type of basement or enclosure (\emph{basement}), and the location of insured contents relative to the lowest floor (\emph{contents\_loc}). Of the remaining eight predictors, seven (number of floors, non-profit status, occupancy, primary residence, rental status, building age, and flood zone; see Table~\ref{tab:variables-flood} in Appendix~\ref{sec:appendix-flood-data}) are structural or policy-application characteristics available at underwriting, but the cause of damage is recorded only once a loss has occurred and so is unavailable to a policyholder before that loss.

Recommended changes to elevation, basement configuration, or contents location are evaluated holding the cause of damage fixed at its observed value. If the cause of damage is itself downstream of these features, for instance because an elevated building is less likely to sustain foundation-related damage, this treatment could over- or understate the actual effect on severity. This case study applies the CP-constrained CE approach to claim severity prediction. 

For two query policyholders, $\boldsymbol{x}_1$ and $\boldsymbol{x}_2$, with predicted LR of 0.46 
and 0.40, 
Table~\ref{tab:flood-query-instance} reports four example counterfactual recommendations for each, two without and two incorporating the CP constraint. While all four CE techniques were applied to both $\boldsymbol{x}_1$ and $\boldsymbol{x}_2$, the results presented here were selected from the WhatIf technique as illustrative examples (see the reproducibility code for further results from the other CE techniques). The predicted value for $\boldsymbol{x}_1$ (respectively, $\boldsymbol{x}_2$) lies approximately at the 95th percentile (respectively, the 68th percentile) of the empirical distribution of predicted values on the test set; while their true LR values lie approximately at the 80th and 96th percentiles of the empirical LR distribution. For both $\boldsymbol{x}_1$  and $\boldsymbol{x}_2$, the associated predictive uncertainty which we quantify using the LACP interval width corresponds to the 76th percentile of the interval widths within the test set. Thus, we characterize both $\boldsymbol{x}_1$ and $\boldsymbol{x}_2$ as high-risk, high-uncertainty policyholders. We aim to reduce their predicted severity by a fixed amount of 10\%, a target chosen for illustration, corresponding to a value of $y^u$ of approximately 0.41 
and 0.36, respectively.

\begin{table}[!htbp]
\centering
\small
\setlength{\tabcolsep}{4pt}
\begin{tabular}{cccccccc}
\toprule
Query & CE & \textit{basement} & \textit{elevation} & \textit{contents\_loc} &
Pred. & CP constr. & CP bound \\
\midrule
\multirow{4}{*}{$\boldsymbol{x}_1$}
& $\boldsymbol{x}'_{1,1}$ & 0 & \textbf{\textit{yes}} & \textit{Lowest\_floor\_only} & 0.38 & $\times$ & 0.64 \\
& $\boldsymbol{x}'_{1,2}$ & 0 & \textit{no} & \textbf{\textit{Lowest\_floor\_and\_above}} & 0.31 & $\times$ & 0.56 \\
& $\boldsymbol{x}'_{1,3}$ & 0 & \textbf{\textit{yes}} & \textbf{\textit{Lowest\_floor\_and\_above}} & 0.27 &\checkmark & 0.52 \\
& $\boldsymbol{x}'_{1,4}$ & \textbf{1} & \textbf{\textit{yes}} & \textit{Lowest\_floor\_only}& 0.21 &  \checkmark & 0.45 \\
\midrule
\multirow{4}{*}{$\boldsymbol{x}_2$}
& $\boldsymbol{x}'_{2,1}$ & \textbf{0} & \textbf{\textit{yes}} & \textit{Lowest\_floor\_only} & 0.34 & $\times$ & 0.58 \\
& $\boldsymbol{x}'_{2,2}$ & 3 & \textbf{\textit{yes}} & \textit{Lowest\_floor\_only} & 0.27 & $\times$ & 0.51 \\
& $\boldsymbol{x}'_{2,3}$ & \textbf{1} & \textit{no} & \textbf{\textit{Lowest\_floor\_and\_above}} & 0.23 &\checkmark & 0.45 \\
& $\boldsymbol{x}'_{2,4}$ & \textbf{1} & \textbf{\textit{yes}} & \textit{Lowest\_floor\_only}& 0.17 &  \checkmark & 0.38 \\
\bottomrule
\end{tabular}
\caption{Actionable, uncertainty-aware Counterfactual Explanations for flood severity reduction, for two query policyholders $\boldsymbol{x}_1$ and $\boldsymbol{x}_2$. Changed actionable feature values are shown in bold. ``Pred.'' is the predicted loss ratio. ``CP constr.'' indicates whether the soft CP reliability constraint is considered when selecting the counterfactual instance. ``CP bound'' is the 80\%-confidence upper bound of predicted severity for the recommended counterfactual.}
\label{tab:flood-query-instance}
\end{table}

For $\boldsymbol{x}_1$ (no basement, not elevated, contents on the lowest floor only), moving part of the contents to the upper floors alone ($\boldsymbol{x}'_{1,2}$) is recommended when reliability is not considered among the criteria for the proposed counterfactuals. Once the soft reliability constraint is enforced, the search instead recommends a more conservative, two-feature change ($\boldsymbol{x}'_{1,3}$ or $\boldsymbol{x}'_{1,4}$), which yields both a lower point prediction and a substantially lower CP upper bound (from a LR of 0.56 down to 0.52--0.45). This pattern, uncertainty-aware recommendations requiring equal or more changes, but delivering more credible reductions, recurs for $\boldsymbol{x}_2$, illustrated in Figure~\ref{fig:second-query-ex-flood}. It is consistent with the mechanism behind Proposition~\ref{prop:cost}. 

Because predictive uncertainty for flood severity modeling is high, the hard CP constraint was infeasible even at an 80\% confidence level. Consistent with Section~\ref{subsec:cp-constraint}, we therefore report the soft-constraint variant. It does not enforce the target on the CP upper bound, so Proposition~\ref{prop:coverage} does not certify these recommendations, and their upper bounds should be read as relative indications of reliability. The reliability constraint trades a looser proximity objective for a recommendation whose predicted LR reduction is backed by a lower CP upper bound, rather than one whose apparent benefit is an artifact of the fitted model's own uncertainty.

The miscoverage level $\alpha \in (0,1)$ is selected by the modeler according to the desired confidence level, with common choices for CP intervals ranging from approximately 1\% to 20\% \citep{idrissi2025unveilsourcesuncertaintyfeature, lei2018distribution, bilkhoo2025confexuncertaintyawarecounterfactualexplanations, adams2025individualisedcounterfactualexamplesusing}. For the flood case study in Table~\ref{tab:flood-query-instance}, we report the 80\% confidence upper bound, corresponding to the lower end of the $(1-\alpha)$-range in related works, to illustrate the high uncertainty of the LR model.



\begin{figure}[!htbp]
    \centering
    \includegraphics[width=.8\textwidth]{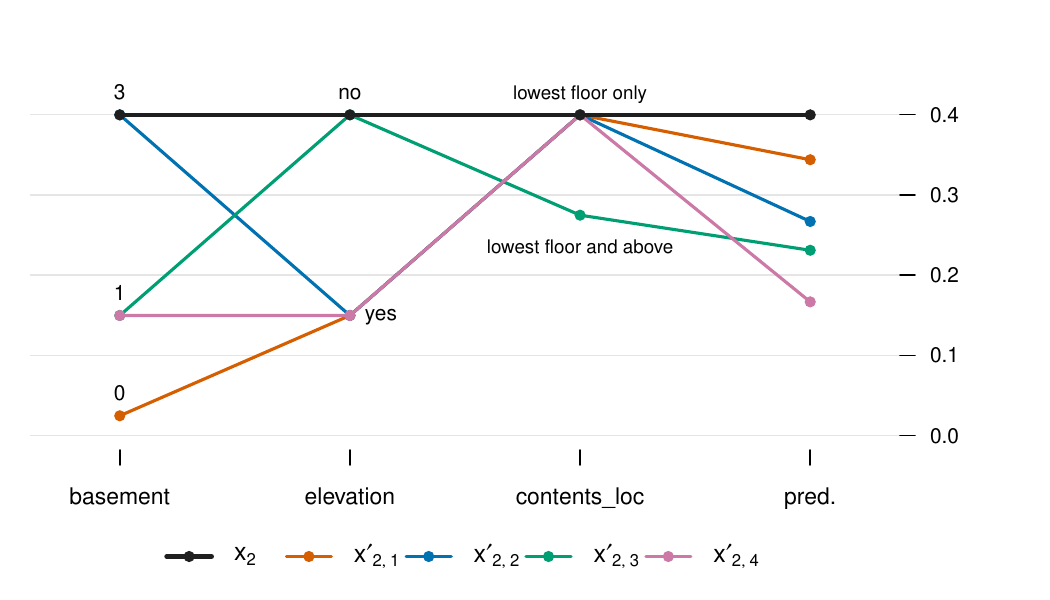}
    \caption{Decision plot for policyholder $\boldsymbol{x}_2$. The plot shows the recommended changes in actionable features to reach the four counterfactual instances ($\boldsymbol{x}'_{2,1}$, $\boldsymbol{x}'_{2,2}$, $\boldsymbol{x}'_{2,3}$, and $\boldsymbol{x}'_{2,4}$ of Table~\ref{tab:flood-query-instance}), along with their corresponding predicted flood severity. Each line starts from the query values of $\boldsymbol{x}_2$ (black) and follows one counterfactual across the three actionable features to its predicted value (right axis).}
    \label{fig:second-query-ex-flood}
\end{figure}

Policyholder $\boldsymbol{x}_2$ owns a property with a subgrade crawlspace (\emph{basement}: 3), is not classified as an elevated building, and has contents located on the lowest floor only, with a predicted LR of 0.40 and an associated flood severity of \$169{,}657. Figure~\ref{fig:second-query-ex-flood} traces four example recommended paths to a target LR below 0.36, using the four CE techniques. Without the reliability constraint, elevating the building ($\boldsymbol{x}'_{2,2}$), alone or combined with removing the crawlspace ($\boldsymbol{x}'_{2,1}$), is sufficient. With the constraint, the search instead recommends renovating the crawlspace into a finished basement in addition to relocating contents ($\boldsymbol{x}'_{2,3}$), changes the model associates with materially lower predictive uncertainty as well as lower predicted severity.

Beyond the predicted risk reduction, the exercise also has diagnostic value for the insurer. Comparing $\boldsymbol{x}'_{1,1}$ and $\boldsymbol{x}'_{1,2}$ in Table~\ref{tab:flood-query-instance} shows that the fitted model attributes a smaller severity reduction to elevating the building than to relocating contents. This is difficult to reconcile with elevation's apparent importance for flood resilience, and suggests the model may be under-informed by the relatively small number of elevated buildings in the training data (14\%), an issue the CE search surfaces as a byproduct rather than by design.

In this application, the reliability constraint therefore does not simply trade a smaller benefit for more reliability, as Proposition~\ref{prop:cost} alone might suggest. It selects recommendations that are less proximal, more reliable, and associated with a larger predicted risk reduction. Whether the pattern holds generally or is specific to this flood dataset is an empirical question for a portfolio-scale study (Section~\ref{sec:limitations}). Section~\ref{subsec:demo-telematics} provides a first check in a second domain, with a frequency rather than a severity model.

\subsection{Driver Telematics}
\label{subsec:demo-telematics}

Usage-based automobile insurance uses telematics data on driving behavior in claim frequency models. To illustrate the approach on a frequency model, we apply the four CE techniques from Table~\ref{tab:ce-properties} to the synthetic driver-telematics dataset of \citet{so2021syntheticdatasetgenerationdriver}, which reproduces the statistical properties of a real Canadian usage-based insurance program (2013--2016, approximately 100,000 policies) without disclosing individual policyholders' data. After preprocessing, the dataset comprises $n=98{,}058$ policyholders and $d=9$ explanatory features, three of which are actionable (data access and variable descriptions are provided in Appendix~\ref{sec:appendix-telematics}). \textit{Pct.drive.rush} is the driver's predominant rush-hour driving pattern (no rush-hour driving, mostly morning, mostly evening, or both). \textit{Accel.14miles} indicates whether the driver had at least one extreme acceleration event exceeding 14 mph/s per 1{,}000 miles, and \textit{Brake.14miles} is the equivalent indicator for extreme braking. These actionable features are obtained by slightly modifying the original features so that changes to them are actionable for policyholders and easier to implement. We fit a Poisson GLM ($h$: the Poisson deviance is 0.276 on $\mathcal{D}_{\text{test}}$) to predict annual claim frequency.

For a query policyholder $\boldsymbol{x}$ with a predominantly evening rush-hour driving pattern and at least one extreme acceleration and braking event, the predicted claim frequency is 22\%, with an 80\%-confidence CP upper bound of 39\%. The predicted value of the query instance $\boldsymbol{x}$ lies at the 99.9th percentile of the empirical distribution of predicted values on the test set; while its actual frequency lies at the 99.7th percentile of the empirical frequency distribution. The associated predictive uncertainty, which we quantify using the LACP interval width, corresponds to the 99.7th percentile of the interval widths in the test set. Table~\ref{tab:telematics-query-instance} reports four examples of counterfactual recommendations, with the aim of reducing the predicted frequency by 10\%, a target chosen for illustration, thereby targeting a predicted frequency below 20\%. While all four CE techniques were applied to $\boldsymbol{x}$, the results presented here were selected from the WhatIf and NICE techniques as illustrative examples (see the reproducibility code for further results from the other CE techniques). Unlike the flood application, predictive uncertainty of $h$ here is low enough that the hard CP constraint of Equation~\ref{eq:cp-ce} is feasible at the 80\% confidence level. 
We select $\alpha = 20\%$ because it allows us to obtain a sufficient number of counterfactuals across the different CE techniques while satisfying the CP constraint. Choosing a higher confidence level for the CP intervals would mean that $\boldsymbol{x}'_3$ in Table~\ref{tab:telematics-query-instance} could no longer be considered a reliable CE. For the selection of $\alpha \in (0,1)$, we recommend starting with the hard constraint at the lowest confidence level considered in related works, 80\%, and decreasing $\alpha$ gradually until no counterfactual instance is reliable, a minimum number of instances is reached, or $\alpha = 1\%$.

\begin{table}[!htbp]
\centering
\small
\begin{tabular}{ccccccc}
\toprule
CE & \textit{Pct.drive.rush} & \textit{Accel.14miles} & \textit{Brake.14miles} & Pred. & CP constr. & CP bound \\
\midrule
$\boldsymbol{x}'_{1}$ & mostly\_pm & \textbf{no} & yes & 18\% & $\times$ & 35\% \\
$\boldsymbol{x}'_{2}$ & mostly\_pm & \textbf{no} & \textbf{no} & 12\% & $\times$ & 24\% \\
$\boldsymbol{x}'_{3}$ & \textbf{none} & yes & \textbf{no} & 10\% & \checkmark & 20\% \\
$\boldsymbol{x}'_{4}$ & \textbf{mostly\_am} & \textbf{no} & \textbf{no} & 7\% & \checkmark & 18\% \\
\bottomrule
\end{tabular}
\caption{Actionable, uncertainty-aware Counterfactual Explanations for claim frequency reduction, for a query policyholder with predicted frequency 22\% and 80\%-confidence CP upper bound 39\%. Changed actionable feature values are shown in bold. ``Pred.'' is the predicted claim frequency. ``CP constr.'' indicates whether the CP reliability constraint is applied. ``CP bound'' is the 80\%-confidence upper bound of predicted frequency for the recommended counterfactual.}
\label{tab:telematics-query-instance}
\end{table}

\begin{figure}[!htbp]
    \centering
    \includegraphics[width=.8\textwidth]{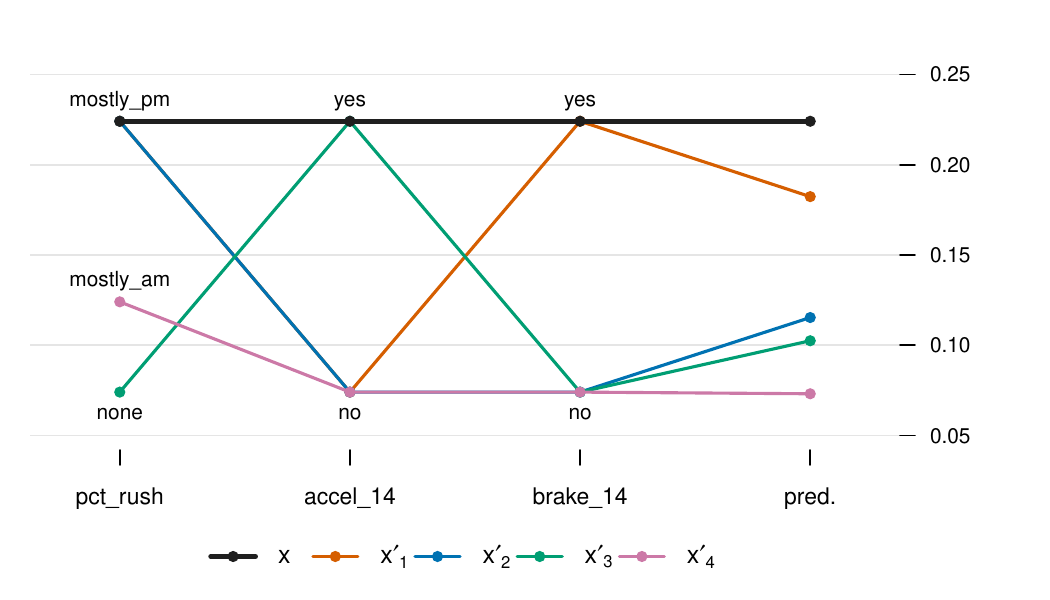}
    \caption{Decision plot for a query policyholder in the telematics dataset. The plot shows the recommended changes in actionable features to reach the four counterfactual instances of Table~\ref{tab:telematics-query-instance}, along with their corresponding predicted claim frequency. Each line starts from the query values (black) and follows one counterfactual across the three actionable features to its predicted value (right axis).}
    \label{fig:telematics-decision-plot}
\end{figure}

Figure~\ref{fig:telematics-decision-plot} traces the four recommended paths to obtain CEs from Table~\ref{tab:telematics-query-instance}. Without the reliability constraint, changing only the acceleration indicator ($\boldsymbol{x}'_1$) reduces the predicted frequency to 18\%, and additionally changing the braking indicator ($\boldsymbol{x}'_2$) reduces it further to 12\%. Neither recommendation is reliable at the 80\% confidence level. Their CP upper bounds, 35\% and 24\%, both remain above the 20\% target, so the reduction in the point prediction is not backed by a corresponding reduction in the model's own uncertainty. With the constraint, $\boldsymbol{x}'_3$ instead recommends eliminating rush-hour driving entirely together with the braking change, reaching a predicted frequency of 10\% with a CP upper bound of exactly 20\%. The recommendation $\boldsymbol{x}'_4$ shifts driving to the morning rush hour and eliminates both extreme acceleration and braking, reaching 7\% with an 18\% upper bound.

As in the flood application, the reliability constraint here does not simply trade predicted benefit for reliability. The recommendations $\boldsymbol{x}'_3$ and $\boldsymbol{x}'_4$ achieve larger reductions in the point prediction as well as reliable ones, a 54.5\% and 68.2\% reduction from the 22\% baseline, compared with 18.2\% and 45.5\% for the two unconstrained recommendations.

The pattern is not uniform across the number of features changed. The recommendation $\boldsymbol{x}'_2$ (unconstrained) already changes two features, the same as $\boldsymbol{x}'_3$ (constrained), so reliability is not purchased with strictly more changes in every pairwise comparison, only on average (see Table~\ref{tab:telematics-quantified}). This is consistent with Proposition~\ref{prop:cost}, which predicts a cost only where the unconstrained optimum falls outside the low-uncertainty region.


\begin{table}[!htbp]
\centering
\small
\begin{tabular}{ccccc}
\toprule
Query & CE & CP constr. & Features changed & Frequency reduction \\
\midrule
\multirow{4}{*}{$\boldsymbol{x}$}
& $\boldsymbol{x}'_{1}$ & $\times$ & 1.00 & 18.2\% \\
& $\boldsymbol{x}'_{2}$ & $\times$ & 2.00 & 45.5\% \\
& $\boldsymbol{x}'_{3}$ & \checkmark & 2.00 & 54.5\% \\
& $\boldsymbol{x}'_{4}$ & \checkmark & 3.00 & 68.2\% \\
\midrule
\multirow{8}{*}{$\mathcal{D}_{\text{test}}$}
& \multirow{2}{*}{WhatIf} 
& $\times$ & 1.43 & 26.2\% \\
& & \checkmark & 2.73 & 61.6\%\\
\cline{2-5}

& \multirow{2}{*}{NICE} 
& $\times$ & 1.16 & 28.2\%\\
& & \checkmark & 2.59 & 62.0\% \\
\cline{2-5}

& \multirow{2}{*}{MOC} 
& $\times$ & 1.09 & 21.8\% \\
& & \checkmark & 3.00 & 59.7\%\\
\cline{2-5}

& \multirow{2}{*}{DiCE} 
& $\times$ & 1.57 & 25.5\% \\
& & \checkmark & 2.68 & 61.1\% \\
\midrule

\multicolumn{2}{c}{\textbf{Average}} & $\times$ & \textbf{1.40} & \textbf{25.7\%} \\
\cline{1-5}
\multicolumn{2}{c}{\textbf{Average}} & \checkmark & \textbf{2.69} & \textbf{61.4\%}\\
\bottomrule
\end{tabular}
\caption{Predicted frequency reduction relative to the query instance's own prediction, by whether the CP reliability constraint (``CP constr.'') is applied, illustrating the proximity-cost pattern characterized by Proposition~\ref{prop:cost}. Rows related to $\boldsymbol{x}$ show the changes from Table~\ref{tab:telematics-query-instance}, while the others show instance-weighted averages over $\mathcal{D}_{\text{test}}$. The bold rows give the overall averages.
}
\label{tab:telematics-quantified}
\end{table}

\subsubsection{Global Comparison and Practical Recommendations}
\label{subsubsec:comparison-telematics}

Table~\ref{tab:telematics-global-results} summarizes the global performance of the four CE methods on 1,000 test samples randomly drawn from the telematics dataset. It reports validity and reliability (Axioms~\ref{ax:validity} and~\ref{ax:reliability}), the desirable properties of plausibility, proximity, sparsity, and multiplicity, and computational and overlap metrics. Actionability (Axiom~\ref{ax:actionability}) holds for all four methods and is not reported.

\begin{table}[!htbp]
\centering
\small
\begin{tabular}{lcccccccccc}
\toprule
& \multicolumn{2}{c}{\textbf{Axioms}} & \multicolumn{4}{c}{\textbf{Desirable properties}} & \multicolumn{3}{c}{\textbf{Computation and overlap}}\\
\cmidrule(lr){2-3}\cmidrule(lr){4-7}\cmidrule(lr){8-10}
Method & Valid. & Reliab. & Plausib. & Prox. & Sparsity & $M$ & Time & Shared & Dup. \\
\midrule
WhatIf & \textbf{100\%} & 3.34\% & 0.008 & 0.164 & 1.48 & 4.34 & \textbf{0.04} & 48\% & \textbf{1} \\
\midrule
NICE & \textbf{100\%} & \textbf{3.78\%} & 0.008 & 0.135 & 1.22 & 1.42 & 2.46 & 94\% & \textbf{1} \\
\midrule
MOC & 74\% & 0.23\% & \textbf{0.003} & \textbf{0.122} & \textbf{1.09} & 1.37 & 0.71 & 74\% & \textbf{1} \\
\midrule
DiCE & \textbf{100\%} & 2.98\% & 0.013 & 0.177 & 1.60 & \textbf{5.15} & 2.32 & 0\% & 6.34 \\
\bottomrule
\end{tabular}
\caption{Global evaluation of Counterfactual Explanation (CE) methods on 1,000 samples from the telematics test set. Valid. and Reliab. correspond to Axioms~\ref{ax:validity} and~\ref{ax:reliability}. Plausibility (``Plausib.'') is the Gower distance to the closest instance in $\mathcal{D}_{\text{train}}$. Proximity (``Prox.'') is the Gower distance to the query instance. Sparsity is the number of changed features. Validity (``Valid.'') is the percentage of valid CEs. Reliability is the percentage of reliable CEs at an 80\% confidence level. ``Time'' is the average execution time, in second, per query. ``Shared'' is the percentage of CEs found in the three other methods. ``Dup.'' is the average number of duplicate CEs per query, and $M$ is the average multiplicity per query.}
\label{tab:telematics-global-results}
\end{table}

Table~\ref{tab:telematics-global-results} shows clear differences between the methods. MOC provides the most plausible and closest counterfactuals and requires the fewest feature changes, but at the cost of a lower validity rate (74\%)\footnote{Validity can be added as a post-hoc filter directly in the R code of the package \texttt{counterfactuals} \citep{JSSv115i09} but may result in no counterfactuals.} and a very low reliability rate (0.23\%). NICE combines perfect validity with the highest reliability rate (3.78\%) and relatively low Gower distances and sparsity. WhatIf is by far the fastest method, at 0.04 per query, with perfect validity. DiCE produces the largest number of counterfactuals per query ($M=5.15$), but has the highest Gower distances and sparsity, a long execution time (2.32), and substantially more duplicates. Increasing the population size and the number of generations of the genetic algorithm in MOC worsens plausibility, proximity, sparsity, and validity (Appendix~\ref{sec:ce-parameters}).

The choice of method should therefore depend on the practical objective. WhatIf suits large-scale applications because of its speed and perfect validity. NICE offers the best balance of validity, reliability, proximity, and sparsity. MOC suits settings that prioritize plausibility and sparsity, especially with models of low predictive uncertainty. DiCE suits settings that need a larger and more diverse set of CEs, when computational cost is not a concern. Most CEs proposed by NICE are already identified by WhatIf and MOC, so NICE may be unnecessary once one of them has been applied. These results come from a single telematics dataset with three actionable features, and the relative performance of the methods may differ for other data, models, and larger numbers of actionable features. Appendix~\ref{sec:ec-telematics-notes} gives further notes on this comparison.


Table~\ref{tab:reliability_comparison} reports the proportion of reliable CEs and the associated costs when the hard constraint is imposed at $\alpha=0.20$ and $\alpha=0.05$. A stricter confidence level lowers the proportion of reliable CEs for every method, and NICE has the highest proportion at both levels. As $\alpha$ decreases, the proximity and sparsity costs generally increase, except for MOC, for which they remain constant. The additional reduction in predicted claim frequency resulting from the reliability constraint is substantial only for MOC, while remaining almost constant for the other CE techniques, thus not contradicting our previous claims about the effect of adding the reliability constraint.

\begin{table}[!htbp]
\centering
\small
\begin{tabular}{ccrrrr}
\toprule
Method & $\alpha$ & Proportion reliable
& $\Delta_{\text{proximity}}$
& $\Delta_{\text{sparsity}}$
& $\Delta_{\text{frequency}}$ \\
\midrule
\multirow{2}{*}{WhatIf}
& 0.20 & 3.34\% & 0.118    &      1.06       &   24.9\% \\
& 0.05 & 1.04\% & 0.135   &       1.22    &       24.6\% \\
\midrule
\multirow{2}{*}{NICE}
& 0.20 & 3.78\% & 0.151  &        1.36     &     27.0\%\\
& 0.05 & 1.08\% & 0.175    &      1.57      &    27.3\%\\
\midrule
\multirow{2}{*}{MOC}
& 0.20 & 0.23\% & 0.167 &          1.50     &     25.6\%\\
& 0.05 & 0.08\% & 0.167 &          1.50    &     29.2\%\\
\midrule
\multirow{2}{*}{DiCE}
& 0.20 & 2.98\% & 0.105 &        0.94    &      24.4\% \\
& 0.05 & 0.95\% & 0.126  &       1.14  &         24.3\% \\
\bottomrule
\end{tabular}
\caption{Illustration of the proximity and sparsity costs
($\Delta_{\text{proximity}}$, $\Delta_{\text{sparsity}}$) and the
frequency reduction ($\Delta_{\text{frequency}}$) resulting from the
addition of the hard reliability constraint, for different values of the Conformal Prediction interval coverage level $\alpha$, averaged over 1,000 randomly drawn samples from the telematics test set. $\Delta$ denote the average difference between constrained and unconstrained counterfactuals, and is it computed only over instances from with a reliable CE.}
\label{tab:reliability_comparison}
\end{table}

Across the two applications, three patterns stand out. First, recommendations that are valid in the model's own terms can be unreliable. In the flood data, the unconstrained recommendations have CP upper bounds above the target, and in the telematics data the CP upper bounds of the two unconstrained recommendations (35\% and 24\%) both exceed the 20\% target. Second, imposing the constraint selects recommendations with lower CP upper bounds and larger predicted risk reductions, at the price of changing more features on average (2.69 against 1.40 in the telematics test sample, Table~\ref{tab:telematics-quantified}). Third, the techniques differ in how readily they produce reliable recommendations, and a stricter confidence level reduces the proportion of reliable recommendations for every technique (Table~\ref{tab:reliability_comparison}). These patterns come from two datasets and a small number of query policyholders, and they should be read as illustrations.

\section{Discussion}
\label{sec:discussion}

We began by asking how an individual policyholder's algorithmically determined premium should be explained to them. Our results show that an explanation can be valid and actionable and still fail the policyholder, and that a Conformal Prediction constraint can screen out the recommendations that the model is too uncertain about. In our applications, only a small share of the valid recommendations is reliable, and the constrained recommendations are more reliable and are associated with larger predicted risk reductions, at the price of changing more features. We now discuss what these results imply for practice and research, and where their limits lie.

\subsection{Implications for Practice}

First, for insurers, the approach offers a way to explain premiums to individual policyholders in actionable terms without disclosing proprietary model internals, consistent with the original motivation for CE in \citet{wachter2018counterfactual}. It may also help insurers meet the GDPR's explanation-related obligations \citep{GDPR2016} and, for life and health insurers specifically, the EU AI Act's transparency obligations for high-risk pricing systems \citep{eu_ai_act_annex3}. Because CE recommendations are individually diagnostic, aggregating them across a portfolio can help insurers identify where a pricing model behaves inconsistently with domain expectations, as the elevation-related modeling issue in Section~\ref{sec:demonstration} illustrates. Recommendations can also support prevention and mitigation programs by quantifying the premium benefit of a specific mitigation action for a specific policyholder. In particular, for climate prevention, considering causal relationships among features further supports the recommended directions.

Second, for regulators, the three axioms offer a possible checklist for judging whether an insurer's premium explanations are usable and trustworthy, since an explanation can be valid and actionable and still fail the policyholder if it is not reliable. Properties such as plausibility and sparsity are not conditions for usable recourse, but regulators could ask insurers to report them as indicators of quality.

Third, for policyholders and consumer advocates, the approach shows what an explanation of a premium can contain beyond a list of influential factors. It states which controllable changes would lower the predicted premium, by how much, and how reliable that reduction is, so that a policyholder can weigh the benefit against the cost of making the change.

\subsection{Implications for Research}
\label{subsec:implications-research}

Our results contribute to three streams of research. For algorithmic accountability, they show that an explanation given to an affected individual needs to be reliable as well as valid and actionable, so that accountability to the individual reaches beyond explaining the model. For explainable AI and algorithmic transparency, they show that reliability can be enforced with a Conformal Prediction constraint, and that its price is a recommendation that can never ask less, and may ask more, of the policyholder. For research on AI-enabled insurance pricing, they show that most valid recommendations are not reliable and that the choice of technique matters.

Beyond insurance, the central idea of this paper is that the reliability of an explanation has to be enforced explicitly, whether inside the search or by filtering its output, because it does not follow automatically from the explanation being valid. The three axioms, Proposition~\ref{prop:coverage}, and Proposition~\ref{prop:cost} are formulated generally, and none of them depends on any property specific to insurance pricing. We therefore expect the underlying mechanism to extend, with minimal modification, to any setting with three structural features. An ML model makes a consequential, individual-level decision, some but not all of the inputs to that decision are within the affected individual's control, and predictive uncertainty varies across the input space. Algorithmic lending, automated hiring screens, and healthcare risk scores plausibly share this structure. We have not applied the approach in these domains, however, and this expectation remains a conjecture.

Section~\ref{sec:related-work} distinguished two layers of algorithmic accountability in insurance. The first is the risk-load allocation mechanism that \citet{zhangxu2024fairratemaking} study, and the second is the communication and recourse layer that this paper studies. A recourse approach that recommends changes requiring capital the policyholder does not have, such as an elevation retrofit, offers less effective recourse to lower-income policyholders. Given the correlations \citet{zhangxu2024fairratemaking} document between minority ownership and property characteristics, it may offer less effective recourse to minority policyholders as well, compared with others receiving structurally similar counterfactuals. We call this \emph{differential actionability}. Two policyholders can receive counterfactuals that are equally valid and equally reliable, yet unequally implementable, because the capital, time, or authority required to execute the recommended change is not distributed evenly across the population that receives it. Neither a fairness audit of the allocation mechanism, nor a technically reliable but generically actionable CE method, would surface it on its own, since each operates at only one of the two layers.

Two design principles follow from our application. Recommendations should be restricted to controllable features, and recourse should be constrained by where the model can be trusted, and not only by how close a change is to the original input. Whether these principles generalize outside insurance, and whether differential actionability recurs in those domains, is a topic for future research.

\subsection{Limitations and Future Research}
\label{sec:limitations}

The application in Section~\ref{sec:demonstration} is illustrative rather than a large-scale empirical validation. It shows how the CP constraint changes the recommendations for a few query policyholders and, for the telematics data, on average over a test sample, but it does not establish that these patterns hold across multiple application domains. A portfolio-scale analysis would test whether uncertainty-aware recommendations systematically demand larger or more numerous changes for policyholders with particular characteristics. 
This is the differential-actionability question raised in Section~\ref{subsec:implications-research}, and it requires a defensible proxy for policyholders' capacity to act. A behavioral study, for instance a vignette experiment comparing actionable CE-based explanations with conventional attribution-based ones on understandability, procedural fairness \citep{lindtyler1988}, and trust, would show whether policyholders find the recommendations more useful.

The reliability bound of Proposition~\ref{prop:coverage} concerns policyholder profiles drawn from the data distribution and accepted by the constraint. It does not certify the particular counterfactual returned, because $\boldsymbol{x}'_m$ is the output of an optimization procedure and not an independently drawn observation. Exact conditional guarantees are impossible under distribution-free, finite-sample requirements \citep{vovk2012conditional, barber2021limits}, so for an individual recommendation we describe the constraint as an uncertainty-screening device that the bound supports but does not certify. Closing this gap would require selection-aware conformal methods or a second, disjoint calibration split evaluated on the accepted region. The bound also does not apply to the soft-constraint fallback needed when the hard constraint is infeasible in high-uncertainty domains such as climate-related property risk, and it concerns the observed claim outcome and not a policyholder's unobserved underlying risk.

The approach also has limits in scope. It is applied to a frequency or severity model separately and not to the pure premium, so actionable changes are not yet expressed in premium terms. Recommending changes presumes that the association between actionable features and the outcome is causal, an assumption that the approach does not verify and that would benefit from insurer domain expertise. Robustness of explanations to small perturbations of the query instance, and the sensitivity of the recommendations to the target $y^u$, are not evaluated. We also treat plausibility as a desirable property, and formalizing it as a further axiom is a possible extension. Deployment raises further questions. Disclosing which features are actionable can invite gaming, since a policyholder told which values lower a premium may alter reported values without changing underlying risk. Communicating confidence levels to lay policyholders without overstating the property or overwhelming them with caveats remains an open interaction-design problem. In addition, cost-weighted proximity metrics that address differential actionability within Equation~\ref{eq:cp-ce} are a promising extension.

\section{Conclusion}
\label{sec:conclusion}

Algorithmic transparency in insurance pricing has so far been studied largely as a property of models, which features matter, or of allocation mechanisms, how collective risk should be fairly divided. We study a third layer, namely whether an individual affected by an algorithmic decision can understand it well enough to act, and whether the actions recommended to them are reliable given what the underlying model does and does not know. We adapt Counterfactual Explanation techniques to insurance frequency and severity models, state three axioms for actionable and reliable recourse, and meet the reliability axiom with a Conformal Prediction constraint that bounds the rate of unreliable recommendations.

In our flood-insurance and driver-telematics illustrations, the constrained recommendations are more reliable and are associated with larger predicted risk reductions, at the price of changing more features. Insurers can use these results to choose among techniques and to judge when a recommendation can be trusted. The bound concerns policyholder profiles accepted by the constraint and not the particular counterfactual returned, and closing this gap is the main open statistical question.

We also relate this individual-level problem to mechanism-level fairness research on insurance ratemaking. The two are complementary layers of algorithmic accountability, and whether recourse is equally actionable for everyone is a question for future research on algorithmic decision systems, in insurance and beyond.

\section*{Acknowledgments}
Agathe Fernandes Machado acknowledges that the project leading to this publication received funding from UNSW Business School, OBVIA, and UQAM. Fei Huang gratefully acknowledges support from the Australian Research Council Discovery Projects funding scheme (DP250104816 and DP260104297).

\paragraph{Data and Code Availability} Replication code is available at \url{https://github.com/fer-agathe/counterfactuals_insurance.git}. The extended R implementations of the CE techniques (Section~\ref{subsec:comparison}) are available at \url{https://github.com/fer-agathe/counterfactuals.git}, forked directly from the original package \url{https://github.com/dandls/counterfactuals.git}. The FIMA NFIP Redacted Claims dataset used in Section~\ref{subsec:demo-flood} and the synthetic driver-telematics dataset used in Section~\ref{subsec:demo-telematics} are publicly available (Appendices~\ref{sec:appendix-flood-data} and~\ref{sec:appendix-telematics}).

\clearpage
\appendix
\renewcommand{\thetable}{A\arabic{table}}\renewcommand{\theequation}{A\arabic{equation}}\renewcommand{\thefigure}{A\arabic{figure}}
\setcounter{table}{0}\setcounter{equation}{0}\setcounter{figure}{0}

\section{Interpretability Methods Compared}
\label{sec:ec-interpretability}

This appendix supports Section~\ref{subsec:why-ce}. It places Counterfactual Explanation (CE) within the taxonomy of ML interpretability methods and compares it with the other local, model-agnostic, post-hoc methods.

Figure~\ref{fig:ml-interpretability} places CE within the taxonomy of ML interpretability methods \citep{molnar2025}. Interpretability divides into models that are interpretable by design (e.g., generalized linear models (GLMs) and shallow decision trees) and complex models that require a separate analysis after training (post-hoc). We require a post-hoc method, because insurers are unlikely to replace validated, higher-performing pricing models with intrinsically interpretable ones solely to communicate with policyholders. Post-hoc methods are either tied to a specific model architecture (model-specific, e.g., attention weights in transformer-based pricing models) or applicable to any model (model-agnostic). We require a model-agnostic method, so that the approach generalizes across insurers and model choices. Model-agnostic methods are either global, describing model behavior averaged across a portfolio, or local, explaining a single prediction. We require a local method, because our objective is to explain one policyholder's premium to that policyholder. These three requirements do not single out CE, since many local, model-agnostic, post-hoc methods exist. Table~\ref{tab:interpretability-posthoc-agnostic} compares them and shows that contrastiveness and actionability are largely absent elsewhere in the toolkit that has been applied to insurance pricing models.

\begin{figure}[!htbp]
    \centering
\begin{tikzpicture}[
  box/.style={
    draw=black, thick,
    minimum width=40mm,
    minimum height=9mm,
    align=center,
    font=\sffamily
  },
  smallbox/.style={
    draw=black, thick,
    minimum width=28mm,
    minimum height=8mm,
    align=center,
    font=\sffamily
  },
  arr/.style={-Stealth, line width=1pt, black}
]
\node[box] (top) {\textbf{ML interpretability}};
\node[smallbox, below=12mm of top, xshift=-30mm] (by) {by-design};
\node[smallbox,     below=12mm of top, xshift=20mm]  (post) {\textit{\textbf{post-hoc}}};
\node[box, below=12mm of post, xshift=-28mm] (agn) {\textbf{model-agnostic}};
\node[box, below=12mm of post, xshift=28mm]  (spec) {model-specific};
\node[smallbox, below=12mm of agn, xshift=-22mm] (local) {\textbf{local}};
\node[smallbox, below=12mm of agn, xshift=22mm]  (global) {global};
\draw[arr] ($(top.south)+(-10pt,0)$) -- (by.north east);
\draw[arr] ($(top.south)+(10pt,0)$)  -- ($(post.north west)+(25pt,0)$);
\draw[arr] (post.south) -- ($(agn.north)+(30pt,0)$);
\draw[arr] (post.south) --  ($(spec.north)-(30pt,0)$);
\draw[arr] (agn.south) -- (local.north);
\draw[arr] (agn.south) --  (global.north);
\end{tikzpicture}
\caption{Taxonomy of machine learning interpretability methods, following \citet{molnar2025}. The bold path is the post-hoc, model-agnostic, local branch to which this paper's requirements narrow attention.}
    \label{fig:ml-interpretability}
\end{figure}

\begin{table}[!htbp]
\centering
\small
\begin{tabularx}{\textwidth}{p{2.1cm} p{1cm} p{1.5cm} X X}
\toprule
Method & Type & Category & Summary & Limitation for Policyholder Communication\\
\midrule
PDP \citep{friedman2001greedy} & global & plot & Marginal effect of a feature on model predictions & Reports model-level, not individual-level, information; assumes feature independence\\
ALE \citep{ale2020apley} & global & plot & Similar to PDP but accounts for feature correlations & Still model-level, not individual-level\\
PFI \citep{breiman2001random, JMLRfisher2019pfi} & global & importance & Change in model performance when a feature is permuted & Model-level; reports importance, not actionable direction, for any individual\\
ICE \citep{goldstein2015peeking} & local & plot & Individual-level analogue of PDP & Graphical, not contrastive; assumes feature independence; not easily read by lay users\\
LIME \citep{lime2016} & local & surrogate model & Locally approximates predictions with an interpretable model & Reports feature contributions, not the change needed to reach a different outcome; sampling methods can ignore feature dependencies\\
SHAP \citep{NIPS2017_8a20a862} & local & importance & Attributes a prediction to features via cooperative game theory & Answers ``why this value,'' not ``what would change it''; computationally costly; can mislead under feature dependence\\
CE \citep{wachter2018counterfactual} & local & contrastive & Minimal feature changes needed to obtain a different, preferred prediction & Multiple counterfactuals may exist per instance (addressed via multiplicity in Section~\ref{sec:ce-insurance})\\
\bottomrule
\end{tabularx}
\caption{Comparison of post-hoc, model-agnostic interpretability methods on their suitability for individual-level, actionable policyholder communication.}
\label{tab:interpretability-posthoc-agnostic}
\end{table}

As Table~\ref{tab:interpretability-posthoc-agnostic} shows, commonly used global interpretability methods, namely partial dependence plots (PDP), accumulated local effects (ALE), and permutation feature importance (PFI), are, by construction, model-level rather than individual-level, and therefore cannot answer a specific policyholder's question of what to change. Among local methods, individual conditional expectation (ICE), local interpretable model-agnostic explanations (LIME), and SHAP, each report some form of feature contribution or sensitivity for a single instance, but none is contrastive. Each describes the prediction actually received, not the smallest change required to obtain a different one, and none natively restricts itself to features the individual can control. CE is the only method in this comparison that directly targets both requirements. We therefore adopt it as the basis for the approach developed in Section~\ref{sec:ce-insurance}. As discussed next, however, CE was not originally designed for the regression, actionability-constrained, uncertainty-aware form the insurance setting requires.

The flood application illustrates the difference (Section~\ref{subsec:demo-flood}). For the query policyholder $\boldsymbol{x}_2$, SHAP and LIME report which features influence the predicted severity and in which direction (Tables~\ref{tab:shap_contributions} and~\ref{tab:lime_contributions} in Appendix~\ref{sec:appendix-flood-data}). Their lists include immutable features such as building age. The CE decision plot (Figure~\ref{fig:second-query-ex-flood}) instead shows which controllable features the policyholder can change to obtain a lower predicted severity. Section~\ref{subsec:ec-attribution-vs-ce} compares the attributions with the recommended changes for this policyholder.


\section{Flood Insurance Data}
\label{sec:appendix-flood-data}

\subsection{Data Access}

The \emph{FIMA NFIP Redacted Claims} dataset can be retrieved directly in R using the \href{https://github.com/ropensci/rfema}{\texttt{rfema}} package. Example notebooks for accessing and working with the dataset are available at \url{https://github.com/FEMA/openfema-samples}.

\subsection{Variable Description}

Table~\ref{tab:variables-flood} lists the eleven building and location characteristics retained as predictors for the severity model of Section~\ref{sec:demonstration}, together with which are treated as actionable. Descriptions of the original variables are available on the FEMA website. Preprocessing steps are shown in the project's code repository.

\begin{table}[!htbp]
\centering
\small
\begin{tabularx}{\textwidth}{|l|l|c|X|}
\hline
\textbf{Variable} & \textbf{Type} & \textbf{Actionability} & \textbf{Description} \\
\hline
basement & categ. & \checkmark & Type of the lowest enclosed area of the building. \\
elevation & categ. & \checkmark & Indicates whether the building has been constructed according to NFIP criteria for an elevated building, i.e., whether the lowest floor is raised above the expected flood level and supported by an appropriate foundation system. \\
contents\_loc & categ. & \checkmark & Location of the contents within the insured building. \\
nb\_floors & num. & $\times$ & Number of floors in the insured building. \\
non\_profit & categ. & $\times$ & Indicates whether the building is reported as a non-profit property in the policy application. \\
occupancy & categ. & $\times$ & Use and occupancy type of the insured structure. \\
primary\_residence & categ. & $\times$ & Indicates whether the building is a primary residence. \\
damage\_cause & categ. & $\times$ & Type of event that caused damage to the insured's property and contents. \\
rental & categ. & $\times$ & Indicates whether the property is a rental property. \\
building\_age & num. & $\times$ & Age of the insured building. \\
flood\_zone & categ. & $\times$ & NFIP Flood Zone derived from the Flood Insurance Rate Map (FIRM), used to rate the insured property. \\
\hline
\end{tabularx}
\caption{Description and actionability of the variables used in the flood claim severity study of Section~\ref{sec:demonstration}.}
\label{tab:variables-flood}
\end{table}

\subsection{SHAP values}

Table~\ref{tab:shap_contributions} illustrates the application of the SHAP interpretability method, presented in Table~\ref{tab:interpretability-posthoc-agnostic}, to the query instance $\boldsymbol{x}_2$. SHAP provides local feature contributions for the fitted XGBoost model on the flood insurance dataset. We report only the top-10 features based on the magnitude of their contribution. Immutable features cannot be excluded from the SHAP contribution calculation. Categorical variables are also represented using their one-hot encoded values (one indicator variable per category), which can make the resulting explanations difficult to interpret for non-technical users.

\begin{table}[!htbp]
\centering
\small
\begin{tabular}{lrr}
\toprule
\textbf{Feature} & \textbf{Contribution} & \textbf{Cumulative contribution} \\
\midrule
baseline & $\times$ & 0.3175 \\
\midrule
\textbf{basement \textit{0}} & -0.0407 & 0.2768 \\
building\_age & 0.0272 & 0.3040 \\
damage\_cause \textit{4} & 0.0227 & 0.3267 \\
\textbf{basement \textit{3}} & 0.0183 & 0.3450 \\
damage\_cause \textit{1} & 0.0174 & 0.3624 \\
\textbf{elevated\_building} & 0.0131 & 0.3756 \\
nb\_floors \textit{2} & 0.0106 & 0.3861 \\
\textbf{contents\_location \textit{Lowest\_floor\_and\_above}} & 0.0075 & 0.3936 \\
occupancy \textit{Mobile\_home} & -0.0064 & 0.3873 \\
primary\_residence & 0.0035 & 0.3907 \\
\midrule
prediction & $\times$ & 0.3996 \\
\bottomrule
\end{tabular}
\caption{SHAP feature contributions for the 10 most important features on $\boldsymbol{x}_2$. Positive contributions increase the predicted value and negative ones decrease it. Features in bold are actionable.}
\label{tab:shap_contributions}
\end{table}

\subsection{LIME}

Table~\ref{tab:lime_contributions} illustrates the application of the LIME interpretability method, presented in Table~\ref{tab:interpretability-posthoc-agnostic}, to the query instance $\boldsymbol{x}_2$. LIME generates perturbed observations around $\boldsymbol{x}_2$, evaluates the black-box model on these observations, and fits a local linear surrogate model. The feature weight reported in Table~\ref{tab:lime_contributions} corresponds to the coefficient associated with that feature in the local surrogate model. We report only the top-10 features based on the magnitude of their estimated coefficients in the linear model. As in the SHAP illustration in Table~\ref{tab:shap_contributions}, immutable features are not excluded, and categorical variables are represented using their one-hot encoded values, which can make the resulting explanations difficult to interpret for non-technical users.

\begin{table}[!htbp]
\centering
\small
\begin{tabular}{lr}
\toprule
\textbf{Feature} & \textbf{Feature weight} \\
\midrule
intercept & 0.1842 \\
\midrule
\textbf{basement \textit{3}} & 0.1935 \\
building\_age
& 0.0110 \\
\textbf{contents\_location \textit{Mobile\_home}}
& 0.0110 \\
nb\_floors \textit{3}
& -0.0102 \\
non\_profit & 0.0098 \\
flood\_zone \textit{V\_zone} & -0.0078 \\
flood\_zone \textit{Other\_A\_zone}
& -0.0069 \\
rental & -0.0065 \\
\textbf{basement \textit{1}} & 0.0054 \\ 
\textbf{basement \textit{2}} & 0.0048 \\
\midrule
prediction & 0.3984 \\
\bottomrule
\end{tabular}
\caption{LIME feature weights for the 10 most important features on $\boldsymbol{x}_2$. Positive weights increase the predicted value and negative ones decrease it. Features in bold are actionable.}
\label{tab:lime_contributions}
\end{table}

\subsection{Comparison with the counterfactual recommendations}
\label{subsec:ec-attribution-vs-ce}

Tables~\ref{tab:shap_contributions} and~\ref{tab:lime_contributions} can be set against the counterfactual recommendations for $\boldsymbol{x}_2$ in Table~\ref{tab:flood-query-instance}. Three points follow.

The attributions and the recommendations agree on which actionable feature matters most. Basement type has the largest SHAP contribution in magnitude ($-0.041$) and by far the largest LIME weight (0.19), and three of the four recommended counterfactuals change it. Elevation, which three of the four recommendations also change, appears in the SHAP list but not in the LIME list. The two attribution methods therefore do not agree on every actionable feature.

Much of both lists refers to features the policyholder cannot change. Six of the ten entries in each list are immutable or, for the cause of damage, unknown before a loss occurs. Building age is the largest positive SHAP contribution (0.027) and ranks second in the LIME list. The counterfactual search holds all of these features fixed.

The attributions do not translate into recommended changes. SHAP contributions are measured against the average prediction (0.32), not against an alternative configuration of the building. The four actionable entries in the SHAP list sum to about $-0.002$, so they give no indication that the predicted loss ratio can fall from 0.40 to between 0.34 and 0.17, as the four recommendations show. Neither list states which combination of changes reaches a given target, and neither indicates how reliable the resulting prediction is, which the CP bound in Table~\ref{tab:flood-query-instance} reports. The comparison is for one policyholder and is illustrative.


\section{Landscape of Counterfactual Explanation Techniques}
\label{sec:ec-landscape}

This appendix supports Section~\ref{subsec:comparison}.

Section~\ref{subsec:notation} separates three axioms, which are required, from a wider set of desirable properties, which are worth optimizing. Table~\ref{tab:ce-landscape} situates several representative techniques from the wider CE literature against the three axioms (validity, actionability, and reliability) and against the desirable properties most relevant to insurance pricing (plausibility, causality, and multiplicity). It also reports applicability to regression, handling of categorical features (common in insurance data), and the availability of a public implementation. Table~\ref{tab:ce-properties} later returns to the three axioms for the four techniques we retain. The properties in Table~\ref{tab:ce-landscape} are not jointly satisfied by most existing CE techniques, which were developed predominantly for classification (see \citealp{verma2020review, karimi2022survey} for comprehensive reviews of this literature).


\begin{table}[!htbp]
\centering
\scriptsize
\setlength{\tabcolsep}{3pt}
\begin{tabularx}{\textwidth}{l X X X X X X X X X}
\toprule
& \multicolumn{3}{c}{\textbf{Axioms}} & \multicolumn{3}{c}{\textbf{Desirable properties}} & \multicolumn{3}{c}{\textbf{Applicability}}\\
\cmidrule(lr){2-4}\cmidrule(lr){5-7}\cmidrule(lr){8-10}
Method & Valid. & Action. & Reliab. & Plausib. & Causal. & Multip. & Regr. & Categ. & Code\\
\midrule
FACE \citep{face2020} & \checkmark & \checkmark & $\times$ & \checkmark & $\times$ & $\times$ & $\times$ & $\checkmark$ & Python\\
Prototype \citep{interpretableCE2021} & \checkmark & \checkmark & $\times$ & \checkmark & $\times$ & $\times$ & $\times$ & \checkmark & Python\\
MOC \citep{Dandl_2020} & (\checkmark) & \checkmark & $\times$ & \checkmark & $\times$ & \checkmark & \checkmark & \checkmark & R\\
GeCo \citep{geco2021} & \checkmark & \checkmark & $\times$ & \checkmark & \checkmark & \checkmark & $\times$ & \checkmark & Python\\
Sequential AR \citep{Verma_Hines_Dickerson_2022} & \checkmark & \checkmark & $\times$ & \checkmark & \checkmark & \checkmark & $\times$ & \checkmark & Python\\
CounteRGAN \citep{pmlr-v180-nemirovsky22a} & \checkmark & \checkmark & $\times$ & \checkmark & $\times$ & $\times$ & $\times$ & \checkmark & Python\\
Consequential decisions \citep{pmlr-v108-karimi20a} & \checkmark & \checkmark & $\times$ & $\times$ & $\times$ & . & \checkmark & . & Python\\
Bayesian CE \citep{spooner2021counterfactualexplanationsarbitraryregression} & \checkmark & . & . & $\times$ & $\times$ & . & \checkmark & . & .\\
Plausible CE \citep{convexCE2020} & \checkmark & . & $\times$ & \checkmark & $\times$ & . & $\times$ & . & .\\
DiCE \citep{dice2020} & \checkmark &  \checkmark & $\times$ & $\times$ & $\times$ &   \checkmark &  \checkmark &  \checkmark & Python \\
DANCE \citep{Bobek2025ActionableAndDiverseCounterfactual} & \checkmark & . & . & . & . & . & $\times$ & . & Python \\
WhatIf \citep{8807255} & \checkmark & $\times$& $\times$ & \checkmark & $\times$ & $\times$ & $\times$ & . & .\\
NICE \citep{prototype2024brughmans} & \checkmark & $\times$ & $\times$ & \checkmark & $\times$ & $\times$ & $\times$ & . & . \\
\bottomrule
\end{tabularx}
\caption{Comparison of representative CE techniques from the wider literature against properties relevant to insurance pricing. Validity, actionability, and reliability correspond to Axioms~\ref{ax:validity}, \ref{ax:actionability}, and~\ref{ax:reliability}. Plausibility, causality, and multiplicity are desirable properties. Regression (``Regr.''), categorical variables (``Categ.''), and public code (``Code'') describe applicability. ``Categ.'' indicates whether the technique handles categorical variables. \checkmark\ indicates the property is satisfied, (\checkmark) that validity is obtained only through a post-hoc filter, $\times$ that it is not satisfied, and . that the information was not found or is unavailable in the source publication. For reliability, $\times$ means that we found no mechanism in the source publication that bounds the rate of unreliable recommendations.}
\label{tab:ce-landscape}
\end{table}

As Table~\ref{tab:ce-landscape} shows, techniques that incorporate plausibility (via causal structure or training-density proximity) are, with one exception, designed only for classification. To our knowledge, none of the techniques in the table bounds the rate of unreliable recommendations (Axiom~\ref{ax:reliability}). In the current implementation of WhatIf and NICE in the R package \texttt{counterfactuals} \citep{JSSv115i09}, both methods handle regression settings, but were originally designed for classification and still do not natively enforce actionability. DiCE \citep{dice2020} handles regression tasks but does not explicitly address the plausibility or causality constraints of the generated counterfactuals. Only MOC combines actionability, plausibility, multiplicity, and regression in its published form.

\section{Formulations and Extensions of the Four Techniques}
\label{sec:ec-formulations}

This appendix supports Section~\ref{subsec:comparison}. It gives the optimization problem solved by each of the four techniques and the extensions we make to WhatIf and NICE.

WhatIf restricts the search in Equation~\ref{eq:min-ce} to the empirical training distribution. Letting $\mathcal{D}_{\text{train}}' := \{\boldsymbol{x}' \in \mathcal{D}_{\text{train}} : h(\boldsymbol{x}') \in Y'\}$, it returns the $M$ closest such instances,
$$
\operatorname*{arg\,min}_{\boldsymbol{x}'_1,\ldots,\boldsymbol{x}'_M \in \mathcal{D}_{\text{train}}'} \sum_{m=1}^M \theta(\boldsymbol{x}'_m,\boldsymbol{x}).
$$
Every candidate is an observed instance from $\mathcal{D}_{\text{train}}$, which guarantees plausibility by construction.

NICE relaxes this by searching sequentially outward from a WhatIf-style initial candidate, changing one feature at a time toward the initial candidate's values and retaining the change that most improves a plausibility-based reward at each step, which also favors sparsity. 

MOC instead solves
$$
\arg \min_{\boldsymbol{x}' \in \mathbb{R}^d} l\left(h(\boldsymbol{x}'), Y'\right) + C_1 \cdot \theta(\boldsymbol{x}', \boldsymbol{x}) + C_2 \cdot \|\boldsymbol{x}'-\boldsymbol{x}\|_0 + C_3 \cdot \theta_k(\boldsymbol{x}', \mathcal{D}_{\text{train}}),
$$
via a genetic algorithm (NSGA-II), where the added terms directly penalize the number of changed features ($\|\cdot\|_0$, sparsity) and the Gower distance to $\boldsymbol{x}'$'s $k$ nearest training neighbors ($\theta_k$, plausibility), searching the continuous space $\mathbb{R}^d$ rather than $\mathcal{D}_{\text{train}}$, which lets it, unlike WhatIf, recommend changes not present in any observed policyholder. To ensure multiplicity, MOC allows the return of all non-dominated individuals across all generations of the genetic algorithm.

DiCE instead solves, for $M$ simultaneous counterfactuals,
$$
\arg\min_{\boldsymbol{x}'_1,\ldots,\boldsymbol{x}'_M \in \mathbb{R}^d} \:
\frac{1}{M}\sum_{m=1}^M l\left(h(\boldsymbol{x}'_m),Y'\right)
+\frac{C_1}{M}\sum_{m=1}^M \theta(\boldsymbol{x}'_m,\boldsymbol{x})
-C_2\det\left(K(\boldsymbol{x}'_1,\ldots,\boldsymbol{x}'_M)\right),
$$
where $K_{i,j}(\boldsymbol{x}') = (1 + \theta(\boldsymbol{x}'_i, \boldsymbol{x}'_j))^{-1}$. The determinant term is maximized when the $M$ recommendations are mutually dissimilar, directly targeting diversity rather than obtaining it as a byproduct of multiplicity, as WhatIf, NICE, and MOC each do in their own way. In practice, we use the DiCE method based on a genetic algorithm for counterfactual search. Unlike WhatIf and NICE, which ensure validity by construction, MOC and DiCE balance potentially conflicting objectives, so validity is not necessarily guaranteed. Nevertheless, the native implementations of both methods can be configured to return only valid counterfactuals. Table~\ref{tab:ce-properties} summarizes the resulting four techniques.

The base implementations of WhatIf and NICE do not natively enforce actionability. We extend their R implementations in the \texttt{counterfactuals} package \citep{JSSv115i09} to restrict candidates to those that leave immutable features unchanged relative to $\boldsymbol{x}$. The two extensions differ in mechanics because the two techniques search differently. To add actionability to WhatIf, we restrict the search to training instances matching the query instance's immutable categorical features exactly and, for immutable numerical features, lying within a user-specified tolerance $\epsilon$ of the query instance. We then overwrite the immutable numerical features to match $\boldsymbol{x}$ exactly and re-verify that the resulting instance's prediction still falls in $Y'$. A small $\epsilon$ favors plausibility, at the risk of returning no candidates when few training instances happen to share the query instance's immutable profile. When this occurs, we relax the match to instances sharing the largest number, rather than all, of the immutable categorical features.

NICE instead builds a counterfactual sequentially. It initializes from a nearby valid training instance and then, at each iteration, changes one additional feature of the query instance toward that instance's value, retaining the change that most improves a plausibility-based reward. To add actionability, we restrict the initial candidate search using the same immutable-feature matching used for WhatIf, so that the sequential search only ever moves through actionable features. 

\section{Conformal Prediction and Proof of Proposition~\ref{prop:coverage}}
\label{sec:appendix-cp}

This appendix reviews the split conformal prediction (SCP) method underlying the reliability constraint of Section~\ref{subsec:cp-constraint} and gives the full proof of Proposition~\ref{prop:coverage}. For a general treatment of conformal prediction beyond what is needed here, see \citet{vovk2022algorithmic} and the tutorial of \citet{ZaffranDieuleveut2024}.

\begin{definition}[Exchangeability]
A finite sequence of random variables $Z_1, \ldots, Z_n$ is \emph{exchangeable} if $(Z_1, \ldots, Z_n) \overset{d}{=} (Z_{\pi(1)}, \ldots, Z_{\pi(n)})$ for every permutation $\pi$ of $\{1, \ldots, n\}$. That is, the joint distribution is invariant to reordering.
\end{definition}

Let $\mathcal{D} := \{(\boldsymbol{X}_i, Y_i)\}_{i=1}^n$ be exchangeable draws from $(\boldsymbol{X}, Y)$, split at random into a training set $\mathcal{D}_{\text{train}}$ and a calibration set $\mathcal{D}_{\text{cal}}$. Given a model $h$ fitted on $\mathcal{D}_{\text{train}}$ and a conformity score $s : \mathcal{X} \times \mathcal{Y} \times \mathcal{Y}^{\mathcal{X}} \to \mathbb{R}$, let $\mathcal{S} = \{s(\boldsymbol{X}_i, Y_i, h) : (\boldsymbol{X}_i, Y_i) \in \mathcal{D}_{\text{cal}}\}$. For a new instance $(\boldsymbol{X}_0, Y_0) \sim (\boldsymbol{X}, Y)$ (for which $Y_0$ is unobserved), at miscoverage level $\alpha \in (0,1)$, the SCP interval is
\begin{equation}
    \widehat{C}_{1-\alpha}(\boldsymbol{X}_0) := \left\{y \in \mathcal{Y} : s(\boldsymbol{X}_0, y, h) \leq q_{1-\alpha}(\mathcal{S})\right\},
    \label{eq:appendix-predinterval}
\end{equation}
where $q_{1-\alpha}(\mathcal{S})$ is the empirical $(1-\alpha)$-quantile of $\mathcal{S}$. For any $(\boldsymbol{X}_0, Y_0) \sim (\boldsymbol{X}, Y)$ exchangeable with $\mathcal{D}_{\text{cal}}$, and assuming no ties in $\mathcal{S}$,
\begin{equation}
\label{eq:appendix-coverage}
1 - \alpha \; \leq \; \mathbb{P}\left(Y_0 \in \widehat{C}_{1-\alpha}(\boldsymbol{X}_0)\right) \; \leq \; 1 - \alpha + \frac{1}{\#\mathcal{D}_{\text{cal}} + 1}.
\end{equation}
When $h$ is a mean-regression model, as is the case for the frequency and severity models considered here, the standard score $s(\boldsymbol{x}, y, h) = |y - h(\boldsymbol{x})|$ yields an interval of constant width, $\widehat{C}_{1-\alpha}(\boldsymbol{X}_0) = [h(\boldsymbol{X}_0) \pm q_{1-\alpha}(\mathcal{S})]$.
\citet{lei2018distribution} instead scale residuals by an estimate of local dispersion $\hat{\sigma}$, using $s(\boldsymbol{x}, y, (h, \hat{\sigma})) = |y - h(\boldsymbol{x})| / \hat{\sigma}(\boldsymbol{x})$, giving the Locally Adaptive Conformal Prediction (LACP) interval
\begin{equation}
\label{eq:appendix-lacp}
\widehat{C}_{1-\alpha}(\boldsymbol{X}_0) = \left[\max\left(0,\, h(\boldsymbol{X}_0) - q_{1-\alpha}(\mathcal{S}) \, \hat{\sigma}(\boldsymbol{X}_0)\right),\; h(\boldsymbol{X}_0) + q_{1-\alpha}(\mathcal{S}) \, \hat{\sigma}(\boldsymbol{X}_0)\right],
\end{equation}
where the lower endpoint is truncated at 0 because insurance frequency and severity outcomes are non-negative. Because the outcome $Y_0$ itself is always non-negative, this truncation excludes no probability mass and leaves the coverage guarantee in Equation~\ref{eq:appendix-coverage} unaffected. The upper endpoint, which is the one relevant to the constraint in Equation~\ref{eq:cp-ce}, is untouched. We adopt the resulting interval because its width varies with $\boldsymbol{X}_0$.


It is wider in sparse, high-uncertainty regions and narrower in dense, well-determined ones. Both variants satisfy the marginal coverage guarantee of Equation~\ref{eq:appendix-coverage}, on average over the input space, but neither satisfies the stronger property of conditional coverage (coverage at $1-\alpha$ pointwise, for every $\boldsymbol{X}_0$). LACP provides a closer approximation to conditional coverage than the constant-width variant, which is why we adopt it in Section~\ref{subsec:cp-constraint}. It does not fully resolve possible violations of exchangeability between $\mathcal{D}_{\text{cal}}$ and the counterfactual instances generated by the CE search (Section~\ref{sec:limitations}).

\begin{proof}[Proof of Proposition~\ref{prop:coverage}]
Since $(\boldsymbol{X}, Y)$ is exchangeable with $\mathcal{D}_{\text{cal}}$, Equation~\ref{eq:appendix-coverage} applies to it and gives
$$
\mathbb{P}\left(Y \notin \widehat{C}_{1-\alpha}(\boldsymbol{X})\right) \leq \alpha.
$$
On the event $\{\boldsymbol{X} \in \mathcal{A}_\alpha,\; Y > y^u\}$ we have $Y > y^u \geq \sup(\widehat{C}_{1-\alpha}(\boldsymbol{X}))$ by definition of $\mathcal{A}_\alpha$, so $Y \notin \widehat{C}_{1-\alpha}(\boldsymbol{X})$. This event is therefore contained in $\{Y \notin \widehat{C}_{1-\alpha}(\boldsymbol{X})\}$, and
$$
\mathbb{P}\left(\boldsymbol{X} \in \mathcal{A}_\alpha,\; Y > y^u\right) \leq \mathbb{P}\left(Y \notin \widehat{C}_{1-\alpha}(\boldsymbol{X})\right) \leq \alpha.
$$
For the conditional form, whenever $\mathbb{P}(\boldsymbol{X} \in \mathcal{A}_\alpha) > 0$,
$$
\mathbb{P}\left(Y \leq y^u \mid \boldsymbol{X} \in \mathcal{A}_\alpha\right) = 1 - \frac{\mathbb{P}(\boldsymbol{X} \in \mathcal{A}_\alpha,\; Y > y^u)}{\mathbb{P}(\boldsymbol{X} \in \mathcal{A}_\alpha)} \geq 1 - \frac{\alpha}{\mathbb{P}(\boldsymbol{X} \in \mathcal{A}_\alpha)}.
$$
\end{proof}

\section{Proof of the Cost Proposition}
\label{sec:ec-cost-proof}

This appendix supports Section~\ref{subsec:cp-constraint}.

\begin{proof}[Proof of Proposition~\ref{prop:cost}]
The upper endpoint $h(\boldsymbol{x}') + q_{1-\alpha}(\mathcal{S})\hat\sigma(\boldsymbol{x}')$ is non-increasing in $\alpha$, because the empirical quantile $q_{1-\alpha}(\mathcal{S})$ is non-increasing in $\alpha$ and $\hat\sigma \geq 0$. A smaller $\alpha'$ therefore makes the constraint harder to satisfy, so $\mathcal{F}_{\alpha'} \subseteq \mathcal{F}_\alpha \subseteq (\mathbb{R}^d)^M$. The shared objective is minimized over a smaller set, which cannot decrease the optimum, and this gives both inequalities. Equality in the first holds if and only if the minimum over $\mathcal{F}_\alpha$ is attained at a point that also minimizes the objective over $(\mathbb{R}^d)^M$, that is, an unconstrained minimizer lies in $\mathcal{F}_\alpha$.
\end{proof}

\section{Counterfactual Explanation Hyperparameters}
\label{sec:ce-parameters}

Table~\ref{tab:parameters-ce} reports the values used for all hyperparameters of the CE techniques in both experiments in Section~\ref{sec:demonstration}, along with their justifications.

\begin{table}[!htbp]
\centering
\small
\begin{tabularx}{\textwidth}{|c|c|X|X|}
\hline
\textbf{Parameter} & \textbf{Approach} & \textbf{Description} & \textbf{Value} \\
\hline
$\epsilon$ & WhatIf & Neighborhood of immutable numerical features used to select instances for actionability & 0.3 (default value) \\
 $R$ & NICE & Reward function for generated counterfactuals & Plausibility criterion \\
 $\delta$ & NICE & Accepted margin for considering a prediction as correct & Half the median absolute error (default value)\\
 $k$ & MOC & Number of data points used for the plausibility objective & 1 (default value) \\
$\mu$ & MOC & Population size of the genetic algorithm & 5 \\
$n_{\text{gen}}$ & MOC, DiCE & Number of generations in the genetic algorithm & 10 \\
$p_{\text{mut}}$ & MOC & Probability of selecting an individual (resp. a feature) for mutation & 0.73 (resp. 0.5) (default values) \\
$p_{\text{rec}}$ & MOC & Probability of selecting an individual (resp. a feature) for recombination & 0.71 (resp. 0.62) (default values) \\
init. & MOC, DiCE & Population initialization strategy in the genetic algorithm & MOC: Feature values of new individuals are sampled from the respective feature value ranges and reset according to their importance measured by ICE (default value); DiCE: KD-tree (default value)\\
$C_1,C_2$ & DiCE & Weights for balancing proximity ($C_1$) and diversity ($C_2$) in the genetic algorithm optimization problem & $C_1 = 0.2$; $C_2 = 5$ (default values) \\
$t$ & DiCE & The genetic algorithm stops when the difference between the previous and current best loss is smaller than $t$ & 0.01 (default value)\\
$d$ & DiCE & Post-hoc plausibility filter retaining only counterfactuals within a bounded Gower distance of their nearest training neighbor & Top-5 counterfactuals \\
\hline
\end{tabularx}
\caption{Description and values of the hyperparameters used for the counterfactual explanation techniques in the experiments of Section~\ref{sec:demonstration}.}
\label{tab:parameters-ce}
\end{table}

\paragraph{WhatIf} The only hyperparameter affecting WhatIf is $\epsilon \in [0,1]$, which controls how close the candidate counterfactuals must be to the original instance with respect to the numerical immutable features. Therefore, $\epsilon=0.3$ allows numerical features to differ by at most $\pm 30\%$ from their original values. A higher $\epsilon$ defines a larger neighborhood, allowing candidate instances with more distant numerical immutable features to be selected, which may reduce the plausibility of the resulting counterfactuals when these features are subsequently set to the query instance's values. We recommend using the smallest $\epsilon$ that allows $\mathcal{D}'_{\text{train}} \neq \emptyset$ after accounting for the immutable features, thereby maximizing plausibility. This approach was adopted in the case study of Section~\ref{sec:demonstration}. The same $\epsilon$-neighborhood is currently applied to all numerical features, although using a feature-specific neighborhood could provide a better balance between the size of $\mathcal{D}'_{\text{train}}$ and the plausibility of the resulting counterfactuals. We leave such an extension for future work.

\paragraph{NICE} Through the iterations of the NICE algorithm, a reward function $R$ is used to identify the best valid counterfactual, based on sparsity, proximity, or plausibility. In the insurance context, proximity and plausibility are of primary interest. Since the initial counterfactual instance $\boldsymbol{x}^{(0)}$ is already selected to be close to the query instance and sparsity is ensured by changing one actionable feature at a time, we prioritize plausibility by choosing $R$ accordingly. The NICE implementation allows valid counterfactual instances to be selected for which the model's prediction error is sufficiently small in the regression setting, providing a first step toward more reliable counterfactuals. The parameter $\delta$ controls the maximum prediction error allowed for the model $h$ on the counterfactual instances, with the default value of the R package \texttt{counterfactuals} set to half the median absolute error. Since this parameter is not particularly relevant to our experiments, as we already encourage reliable changes through the uncertainty constraint, we use its default value.

\paragraph{MOC} Many hyperparameters are involved in the MOC implementation of the genetic algorithm. First, the population can be initialized in several ways, but we retain the default initialization (from the R package \texttt{counterfactuals}), as reported in Table~\ref{tab:parameters-ce}, since it does not have a substantial impact on the resulting counterfactuals in our experiments. Similarly, for $p_{\text{mut}}$ and $p_{\text{rec}}$, we retain the default values. 
In both the flood and telematics experiments, only three features are actionable, with a limited number of categories for each, so increasing these parameters does not substantially expand the search space of counterfactuals, as the entire set of potential counterfactuals that maximize plausibility can be readily reached. However, for datasets with a large number of actionable features, optimizing these hyperparameters with respect to different CE properties might result in substantially different counterfactuals. In addition, $\mu$ and $n_{\text{gen}}$, which control the population size and the number of generations of the genetic algorithm, respectively, were chosen to optimize the plausibility criterion while still obtaining multiple counterfactuals, given the low number of actionable features. Larger values of these parameters lead to more extensive exploration within the genetic algorithm and, consequently, a worse plausibility criterion. This resulted in $\mu = 5$ and $n_{\text{gen}} = 10$. We used the same values for the flood dataset because the number of actionable features is the same. As with the other genetic algorithm hyperparameters, when more actionable features are present in a dataset, these parameters can be tuned according to other criteria, such as sparsity and proximity. 

\paragraph{DiCE} The DiCE algorithm is implemented in the Python package \texttt{DiCE}. To adapt the CE techniques to the insurance context, we focused primarily on the R implementations of the three other techniques in \texttt{counterfactuals} and did not modify the core algorithm of DiCE. Whenever possible, we matched the corresponding genetic-algorithm parameters between DiCE and MOC, but only when they could be specified in the DiCE user function. Otherwise, we retained the default values. First, the population initialization for DiCE uses the default KD-tree algorithm, which selects counterfactual instances from $\mathcal{D}_{\text{train}}$, as no other initialization method matched that used in MOC. Similarly, we retained the default values for $C_1$, $C_2$, and $t$, for which there are no corresponding parameters in the MOC implementation. For $n_{\text{gen}}$, we use the value selected for MOC. Finally, $d$ is chosen to obtain at most the five closest counterfactuals to an observed instance in $\mathcal{D}_{\text{train}}$, thereby matching the plausibility objective with $k=1$ in the MOC optimization problem. Here, for our empirical illustration, obtaining five distinct counterfactuals per method is sufficient, but $d$ can be adjusted to select more counterfactuals depending on the needs of the application. As this is a post-hoc parameter, it can be easily modified without having to resolve the optimization problem in practice.

\section{Driver Telematics Data}
\label{sec:appendix-telematics}

\subsection{Data Access}

The synthetic driver-telematics dataset of \citet{so2021syntheticdatasetgenerationdriver} is available in the R package \texttt{CASdatasets} under the name \textit{catelematic13}.

\subsection{Variable Description}

Table~\ref{tab:variables-telematics} lists the nine explanatory features retained for the frequency model of Section~\ref{subsec:demo-telematics}, of which three are actionable.

\begin{table}[!htbp]
\centering
\small
\begin{tabularx}{\textwidth}{|l|l|c|X|}
\hline
\textbf{Variable} & \textbf{Type} & \textbf{Actionability} & \textbf{Description} \\
\hline
Duration & num. & $\times$ & Duration of the insurance coverage of the policy, in days. \\
Insured.age & num. & $\times$ & Age of the insured driver, in years. \\
Insured.sex & categ. & $\times$ & Sex of the insured driver. \\
Car.age & num. & $\times$ & Age of the vehicle, in years. \\
Credit.score & num. & $\times$ & Credit score of the insured driver. \\
Region & categ. & $\times$ & Type of region where the driver lives: rural or urban. \\
Pct.drive.rush & categ. & $\checkmark$ & Predominant rush-hour driving pattern: no rush-hour driving, mostly morning rush, mostly evening rush, or both. \\
Accel.14miles & categ. & $\checkmark$ & Whether the driver had at least one extreme acceleration event exceeding 14 mph/s per 1{,}000 miles. \\
Brake.14miles & categ. & $\checkmark$ & Whether the driver had at least one extreme braking event exceeding 14 mph/s per 1{,}000 miles. \\
\hline
\end{tabularx}
\caption{Description and actionability of the variables used in the telematics frequency study of Section~\ref{subsec:demo-telematics}. ``mph/s'' denotes miles per hour per second.}
\label{tab:variables-telematics}
\end{table}

\section{Further Notes on the Telematics Comparison}
\label{sec:ec-telematics-notes}

These notes supplement Section~\ref{subsubsec:comparison-telematics}. More generally, these results support using several CE techniques, because they produce counterfactuals with complementary properties and give practitioners a broader range of actionable recommendations. Most of the CEs proposed by NICE are already identified by WhatIf and MOC, and more than 70\% of those generated by MOC are identified by the other two techniques. NICE may therefore be unnecessary once WhatIf or MOC has been applied, and WhatIf is a practical default for large datasets. The results from DiCE are not shared by the other three techniques, so it can complement WhatIf.

The reported results should nevertheless be interpreted as empirical evidence from a single telematics dataset rather than as a universal ranking of CE methods. The relative performance of the methods may vary depending on the data characteristics, the predictive model, the number and type of actionable features, and the available computational budget. For this particular dataset, only three actionable features are considered, resulting in a restricted counterfactual search space. Genetic algorithms may therefore provide less additional benefit in this setting than in applications involving a larger number of actionable features. In this case, the randomness of the CE methods should be taken into account when assessing the stability of the generated CEs across multiple random seeds. NICE, MOC, and DiCE involve sampling features and observations from $\mathcal{D}_{\text{train}}$, whereas the WhatIf approach does not. WhatIf, NICE, and MOC are implemented in the same R package, \texttt{counterfactuals}, which makes their simultaneous application straightforward, whereas DiCE is implemented in Python.

\bibliography{biblio-isr}

\end{document}